\definecolor{burntorange}{rgb}{0.8, 0.33, 0.0}
\newcommand{\xxnote}[3]{}
  \renewcommand{\xxnote}[3]{\color{#2}{#1: #3}}
\DeclareMathOperator*{\argmin}{arg\,min}
\newcommand{\minprob}[1]{\underset{#1}{\min}}
\newcommand{\argminprob}[1]{\underset{#1}{\argmin}}
\newcommand{\norm}[2]{\left|\left| #1 \right|\right|_{#2}}
\newcommand{\abs}[1]{\left|#1\right|}
\newcommand{\expect}[2]{\mathbb{E}_{#1}\left[#2\right]}
\newcommand{\prob}[1]{P\left(#1\right)}
\newcommand{\bbm}{\begin{bmatrix}}
\newcommand{\ebm}{\end{bmatrix}}
\newcommand{\stateSpace}{\mathcal{S}}
\newcommand{\actionSpace}{\mathcal{A}}
\newcommand{\cost}{c}
\newcommand{\costhat}{\hat{\cost}}
\newcommand{\dynfn}{P}
\newcommand{\dynfnhat}{\hat{P}}
\newcommand{\discount}{\gamma}
\newcommand{\MDPtrue}{\left(\stateSpace, \actionSpace, \cost, \dynfn, \discount, \mu \right)}
\newcommand{\MDP}{\mathcal{M}}
\newcommand{\MDPhat}{\hat{\mathcal{M}}}
\newcommand{\HStepCost}[4]{\left<#1, #2, #3 \right>_{#4}}
\newcommand{\policy}{\pi}
\newcommand{\policyopt}{\pi^{*}}
\newcommand{\policyhat}{\hat{\pi}}
\newcommand{\valuehat}{\hat{V}}
\newcommand{\valuepolicy}[2]{V^{#1}_{#2}}
\newcommand{\distTraj}[2]{\mu^{#1}_{#2}}
\newcommand{\algName}[0]{\textsc{MPQ$\left(\lambda\right)$}\xspace}
\newcommand{\algNameMPQ}[0]{\textsc{MPQ}\xspace}
\newcommand{\mppi}{\textsc{MPPI}\xspace}
\newcommand{\cartpole}{\textsc{CartpoleSwingup}\xspace}
\newcommand{\peginsertion}{\textsc{SawyerPegInsertion}\xspace}
\newcommand{\handpen}{\textsc{InHandManipulation}\xspace}
\newtheorem{theorem}{Theorem}[section]
\newtheorem{lemma}{Lemma}[section]
\newtheorem{observation}{O}
\newtheorem*{theorem*}{Theorem}
\title{%Taming the Bias: 
\mbox{Blending MPC \& Value Function Approximation} for Efficient Reinforcement Learning}
\begin{document}
\maketitle

\vspace*{-2cm}
\begin{center}
    {\bf
    Mohak Bhardwaj$^{1}$ \hspace*{20pt} Sanjiban Choudhury$^{2}$ \hspace*{20pt} Byron Boots$^1$ \\[0.2cm]
    }
    \hspace*{5pt} $^1$ University of Washington \hspace*{5pt} $^2$ Aurora Innovation Inc. \hspace*{5pt}\\[0.2cm]
    % {\tt mohakb @ cs.washington.edu} %$\lbrace$ $\rbrace$
\end{center}

\vspace*{0.5cm}

%===============================================================================
\begin{abstract}
% \scnote{SC take 2:}
% \mohak{MB take 3:}
Model-Predictive Control (MPC) is a powerful tool for controlling complex, real-world systems that uses a model to make predictions about future behavior. For each state encountered, MPC solves an online optimization problem to choose a control action that will minimize future cost. %Instead of searching offline for a single, global policy, MPC algorithms optimize simple, local policies \emph{online}. 
This is a surprisingly effective strategy, but real-time performance requirements warrant the use of simple models. If the model is not sufficiently accurate, then the resulting controller can be biased, limiting performance.  
We present a framework for improving on MPC with model-free reinforcement learning (RL). The key insight is to view MPC as constructing a series of local Q-function approximations. We show that by using a parameter $\lambda$, similar to the trace decay parameter in TD($\lambda$), we can systematically trade-off learned value estimates against the local Q-function approximations. %This allows us to build a policy that works we  % thus trading off different sources of approximation (model bias and error over time. %without making restrictive assumptions about the class of dynamics models% 
We present a theoretical analysis that shows how error from inaccurate models in MPC and value function estimation in RL can be balanced. We further propose an algorithm that changes $\lambda$ over time to reduce the dependence on MPC as our estimates of the value function improve,
and test the efficacy our approach on challenging high-dimensional manipulation tasks with biased models in simulation. We demonstrate that our approach can obtain performance comparable with MPC with access to true dynamics even under severe model bias and is more sample efficient as compared to model-free RL.

\end{abstract}

%===============================================================================
\section{Introduction}
% Model Predictive Control (MPC) can be viewed as a method of generating a local approximation of the Q-function by performing online search using a simplified model of the world. MPC can be combined with a value function learned from real data to improve performance of the system over time.   

% What is the problem?
% Killing of bias induced by model errors within a MPC
% We consider the problem of reinforcement learning (RL) for systems with access to approximate dynamics models. 

Model-free Reinforcement Learning (RL) is increasingly used in challenging sequential decision-making problems including high-dimensional
robotics control tasks~\citep{haarnoja2018soft,schulman2017proximal} as well as video and board games~\citep{silver2016mastering,silver2017mastering}.  While these approaches are extremely general, and can theoretically solve complex problems with little prior knowledge, they also typically require a large quantity of training data to succeed. In robotics and engineering domains, data may be collected from real-world interaction, a process that can be dangerous, time consuming, and expensive.

%We consider high-dimensional, continuous control problems, where solving for an offline, fixed policy is intractable -- either because such a policy is too complex or because the full model is unknown. In contrast, 
Model-Predictive Control (MPC) offers a simpler, more practical alternative. While RL typically uses data to learn a global model offline, which is then deployed at test time, MPC solves for a policy \emph{online} by optimizing an approximate model for a finite horizon at a given state. This policy is then executed for a single timestep and the process repeats. %By re-optimizing at every timestep, MPC can overcome small amounts of model error, and  
MPC is one of the most popular approaches for control of complex, safety-critical systems such as autonomous helicopters~\citep{abbeel2010autonomous}, aggressive off-road vehicles~\citep{williams2016aggressive} and humanoid robots~\citep{erez2013integrated}, owing to its ability to use approximate models to optimize complex cost functions with nonlinear constraints~\citep{mayne2000constrained,mayne2011tube}.  %\boots{MPC is also popular because it can handle constraints well}.

However, approximations in the model used by MPC can significantly limit performance. Specifically, \emph{model bias} may result in \emph{persistent errors} that eventually compound and become catastrophic. For example, in non-prehensile manipulation, practitioners often use a simple quasi-static model that assumes an object does not roll or slide away when pushed. For more dynamic objects, this can lead to aggressive pushing policies that perpetually over-correct, eventually driving the object off the surface. 

Recently, there have been several attempts to combine MPC with model free RL, showing that the combination can improve over the individual approaches alone. Many of these approaches involve using RL to learn a terminal cost function, thereby increasing the effective horizon of MPC~\citep{zhong2013value, lowrey2018plan, bhardwaj2020information}. However, the learned value function is only applied at the end of the MPC horizon. Model errors would still persist in horizon, leading to sub-optimal policies. Similar approaches have also been applied to great effect in discrete games with known models~\citep{silver2016mastering,silver2017mastering,anthony2017thinking}, where value functions and policies learned via model-free RL are used to guide Monte-Carlo Tree Search. In this paper, we focus on a somewhat broader question: can machine learning be used to both increase the effective horizon of MPC, while also correcting for model bias?

One straightforward approach is to try to learn (or correct) the MPC model from real data encountered during execution; however there are some practical barriers to this strategy. Hand-constructed models are often crude-approximations of reality and lack the expressivity to represent encountered dynamics. Moreover, increasing the complexity of such models leads to computationally expensive updates that can harm MPC's online performance. %Another approach is to try and learn a terminal value function using model free reinforcement learning techniques~\citep{zhong2013value, lowrey2018plan, bhardwaj2020information}. While this is more feasible to apply, the learnt value function is only applied at the end after significant discounting. Even if it were to converge to the optimal value, the model errors would still persist in horizon, leading to sub-optimal policies. 
Model-based RL approaches such as~\citet{chua2018deep,nagabandi2018neural,shyam2019model} aim to learn general neural network models directly from data. However, learning globally consistent models is an exceptionally hard task due to issues such as covariate shift~\citep{ross2012agnostic}.

% One plausible approach is to try and improve the dynamics model used by MPC. However, it is not straightforward to improve the predictive power of such models, especially over all parts of the state space that the policy might visit in the future. Going back to our previous example, improving upon the quasi-static model would require accurate modelling of quantities such as contact forces and friction coefficients which requires considerable effort. Moreover, a higher-fidelity simulator can make optimization harder and computationally challenging to run in real-time. Thus, we require methods that can mitigate the model-bias without adding computational and planning complexity. The alternative approach of model-free RL does not suffer from model-bias, but usually requires millions of samples to learn a sufficiently accurate value function or policy.\mohak{cite}

% What is our insight?
% Both of these are approximations to the Q function
% Both of them have errors that arise from different sources 
% Can get the best of both worlds by blending. 
% This blending is dynamic - model free Q starts of bad but gets better over time. 

We propose a framework, $\algName$, for weaving together MPC with learned value estimates to trade-off errors in the MPC model and approximation error in a learned value function. Our key insight is to view MPC as tracing out a series of local Q-function approximations. We can then blend each of these Q-functions with value estimates from reinforcement learning.  We show that by using a blending parameter $\lambda$, similar to the trace decay parameter in TD($\lambda$), we can systematically trade-off errors between these two sources. Moreover, by smoothly decaying $\lambda$ over learning episodes we can achieve the best of both worlds - a policy can depend  on a prior model before its has encountered any data and then gradually become more reliant on learned value estimates as it gains experience.  

% We leverage a key insight that MPC can be viewed as a method for constructing a series of local Q-function approximations. This view of MPC allows us to come up with methods to combine estimates with a Q-function learned using RL  balance out the errors in the different sources. \mohak{Mention the desired behavior where we want to rely on model early on and Q-function later} 

% What did we do?
% Framework that takes in any MPC algorithm and a model free Q estimate and merges them.
% Showed there is a critical horizon / lambda

% Contributions
% Framework
% Instantiations - MPPI, ILQR
% Analysis
% Evaluations
To summarize, our key contributions are:\\
%\begin{enumerate}
{\bf 1. }  A framework that unifies MPC and Model-free RL through value function approximation.\\
{\bf 2. }  Theoretical analysis of finite horizon planning with approximate models and value functions.\\
{\bf 3. }  Empirical evaluation on challenging manipulation problems with varying degrees of model-bias.
\vspace{-2mm}
\section{Preliminaries}
\label{sec:preliminaries}
\vspace{-2mm}
\subsection{Reinforcement Learning}
\label{subsec:rl}
\vspace{-2mm}
We consider an agent acting in an infinite-horizon discounted Markov Decision Process (MDP). An MDP is defined by a tuple $\mathcal{M} = \MDPtrue$ where $\stateSpace$ is the state space, $\actionSpace$ is the action space, $\cost(s,a)$ is the per-step cost function, $s_{t+1} \sim \prob{ \cdot | s_t, a_t}$ is the stochastic transition dynamics and $\discount$ is the discount factor and $\mu(s_0)$ is a distribution over initial states. A closed-loop policy $\pi(\cdot|s)$ outputs a distribution over actions given a state. Let $\distTraj{\pi}{\MDP}$ be the distribution over state-action trajectories obtained by running policy $\pi$ on $\mathcal{M}$.
% The KL divergence between $\pi$ and a \textit{prior} policy $\prior$ at a particular state is $\KL{\pi(\cdot|s)}{\prior(\cdot|s)} = \expect{\pi}{\log \left(\pi(a|s) / \prior(a|s)\right)}$. In the rest of the paper we denote $c_{t} = c(s_t, a_t)$ and $ \mathrm{KL}_{t} = \KL{\pi(\cdot|s_t)}{\prior(\cdot|s_t)}$.
% We define the \textit{discounted} KL augmented return as 
% \begin{equation}
%     G_t = \sum_{t'=t}^{\infty} \gamma^{t'-t} (c_{t'} + \beta \mathrm{KL}_{t'})
% \end{equation}
The value function for a given policy $\pi$, is defined as $\valuepolicy{\pi}{\MDP}(s) = \expect{\distTraj{\pi}{\MDP}}{\sum_{t=0}^\infty\gamma^{t}c(s_t, a_t) \mid s_{0}=s}$ and the action-value function as $Q^\pi_{\MDP}(s,a) = \expect{\distTraj{\pi}{\MDP}}{\sum_{t=0}^\infty\gamma^{t}c(s_t, a_t) \mid s_{0}=s, a_{0}=a}$. The objective is to find an optimal policy $\policyopt = \argminprob{\pi}\; \expect{ s_0 \sim \mu}{ \valuepolicy{\pi}{\MDP}(s_0)}$. 
We can also define the (dis)-advantage function $A^{\pi}_{\MDP}(s,a) = Q^{\pi}_{\MDP}(s,a) - V^{\pi}(s)$, which measures how good an action is compared to the action taken by the policy in expectation. It can be equivalently expressed in terms of the Bellman error as $A^{\pi}_{\MDP}(s,a) = c(s,a) + \gamma  \expect{s' \sim P, a' \sim \pi}{Q^{\pi}_{\MDP}(s',a')} - \expect{a \sim \pi}{Q^{\pi}_{\MDP}(s,a)}$.

\vspace{-2mm}
\subsection{Model-Predictive Control}\vspace{-2mm}
MPC is a widely used technique for synthesizing closed-loop policies for MDPs. Instead of trying to solve for a single, globally optimal policy, MPC follows a more pragmatic approach of optimizing simple, local policies \emph{online}. At every timestep on the system, MPC uses an approximate model of the environment to search for a parameterized policy that minimizes cost over a finite horizon. An action is sampled from the policy and executed on the system. The process is then repeated from the next state, often by warm-starting the optimization from the previous solution. 

We formalize this process as solving a simpler \emph{surrogate} MDP $\MDPhat = (\stateSpace, \actionSpace, \costhat, \dynfnhat, \discount, \hat{\mu}, H )$ online, which differs from $\MDP$ by using an approximate cost function $\costhat$, transition dynamics $\dynfnhat$ and limiting horizon to $H$. Since it plans to a finite horizon, it's also common to use a terminal state-action value function $\hat{Q}$ that estimates the cost-to-go. The start state distribution $\hat{\mu}$ is a dirac-delta function centered on the current state $s_0 = s_t$. 
MPC can be viewed as iteratively constructing an estimate of the Q-function of the original MDP $\MDP$, given policy $\policy_{\phi}$  at state $s$:\vspace{-2mm}
\begin{equation}
	\label{eq:mpc_q_function}
	Q^\phi_H(s,a) = \expect{\distTraj{\pi_\phi}{\MDP}}{\sum_{i=0}^{H-1} \gamma^{i}\costhat(s_i, a_i) +  \gamma^H \hat{Q}(s_H, a_H) \mid s_{0}=s, a_{0}=a}
\end{equation}
% \boots{should the state and actions have subscript $i$ or should the sum be over $t$? Be careful with the notation here and below} 

MPC then iteratively optimizes this estimate (at current system state $s_t$) to update the policy parameters
\begin{equation}
    \label{eq:mpc_policy_opt}
    \phi^*_t = \argminprob{\phi} \; Q^\phi_H(s_t, \pi_\phi(s_t)) 
\end{equation}

Alternatively, we can also view the above procedure from the perspective of disadvantage minimization.
Let us define an estimator for the 1-step disadvantage with respect to the potential function $\hat{Q}$ as $A(s_i, a_i) = c(s_i,a_i) + \gamma \hat{Q}(s_{i+1},a_{i+1}) - \hat{Q}(s_{i},a_{i})$. We can then equivalently write the above optimization as minimizing the discounted sum of disadvantages over time via the telescoping sum trick

\begin{equation}
    \label{eq:mpc_prob_disadvantage}
    \argminprob{\policy \in \Pi}\; \expect{\distTraj{\pi_\phi}{\MDP}}{ \hat{Q}(s_0, a_0) + \sum_{i=0}^{H-1} \gamma^i A(s_i, a_i) \mid s_0 = s_t} 
\end{equation}
% \lambda
% \scnote{This section needs closure and a smooth transition to next. For example, the advantage version doesn't make things any easier - it's equally difficult. }

% Different classes of algorithms exist in order to solve the above problem. 

% \mohak{Briefly mention shooting based vs gradient based algorithms?}

Although the above formulation queries the $\hat{Q}$ at every timestep, it is still exactly equivalent to the original problem and hence, does not mitigate the effects of model-bias. In the next section, we build a concrete method to address this issue by formulating a novel way to blend Q-estimates from MPC and a learned value function that can balance their respective errors.

% !TEX root = ../main.tex

%===============================================================================
\vspace{-2mm}
\section{Mitigating Bias in MPC via Reinforcement Learning}\vspace{-2mm}
In this section, we develop our approach to systematically deal with model bias in MPC by blending-in learned value estimates. First, we take a closer look at the different sources of error in the estimate in (\ref{eq:mpc_q_function}) and then propose an easy-to-implement, yet effective strategy for trading them off.
\vspace{-2mm}
\subsection{Sources of Error in MPC}\vspace{-2mm}
% \mohak{why terminal cost?}(terminal cost)
The performance of MPC algorithms critically depends on the quality of the Q-function estimator  $Q^\phi_H(s,a)$ in (\ref{eq:mpc_q_function}). There are three major sources of approximation error. First, model bias can cause compounding errors in predicted state trajectories, which biases the estimates of the costs of different action sequences. The effect of model error becomes more severe as $H \xrightarrow[]{} \infty$. Second, the error in the terminal value function gets propagated back to the estimate of the Q-function at the start state. With discounting, the effect of error due to inaccurate terminal value function diminishes as $H$ increases. 
% \mohak{Terminal potential/value function needs to be discussed more beforehand}. 
Third, using a small $H$ with an inaccurate terminal value function can make the MPC algorithm greedy and myopic to rewards further out in the future.
% \boots{only if terminal cost is poor, right?} 

We can formally bound the performance of the policy with approximate models and approximate learned value functions.
 In Theorem~\ref{thm:hstep_bound}, we show the loss in performance of the resulting policy as a function of the model error, value function error and the planning horizon. 

\begin{theorem}[Proof Appendix~\ref{sec:appendix:hstep}]
\label{thm:hstep_bound}
Let MDP $\MDPhat$ be an $\alpha$-approximation of $\mathcal{M}$ such that $\forall (s,a)$, we have $\norm{ \dynfnhat(s'|s,a) - \dynfn(s'|s,a)}{1} \leq \alpha$ and $\abs{ \costhat(s,a) - \cost(s,a)}  \leq \alpha$. Let the learned value function $\hat{Q}(s,a)$ be an $\epsilon$-approximation of the true value function $\norm{\hat{Q}(s,a) - Q^{\policyopt}_{\mathcal{M}}(s,a)}{\infty} \leq \epsilon$. 
% The performance of the MPC policy is bounded w.r.t the optimal policy as $\norm{V^{\pi^*}_{\mathcal{M}}(s) - V^{\hat{\pi}}_{\mathcal{M}}(s)}{\infty}$
The performance of the $H$-step greedy policy is bounded w.r.t the optimal policy as $\norm{V^{\pi^*}_{\mathcal{M}}(s) - V^{\hat{\pi}}_{\mathcal{M}}(s)}{\infty}$
\begin{equation}
    \label{eq:hstepadp_loss:v_loss}
    \begin{aligned}
  \leq 2 \left(\frac{\gamma (1 - \gamma^{H-1})}{(1 - \gamma^{H})(1 - \gamma)} \alpha H \left(\frac{c_{\max} - c_{\min}}{2}\right) + \frac{\gamma^{H}\alpha H}{1 - \gamma^{H}} \left(\frac{V_{\max} - V_{\min}}{2}\right) + \frac{\alpha}{1 - \gamma} + \frac{\gamma^{H}\epsilon}{1 - \gamma^{H}} \right)
    \end{aligned}
\end{equation}
\end{theorem}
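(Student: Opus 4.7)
The plan is to extend the standard approximate-dynamic-programming error analysis to an $H$-step lookahead operator that simultaneously uses an approximate model $\MDPhat$ and an approximate terminal value $\hat{Q}$. I would define the true $H$-step Bellman optimality operator $T_H$, which rolls out under $\dynfn$ and $\cost$ for $H$ steps and bootstraps with $V^{\policyopt}_{\MDP}$, and the approximate operator $\hat{T}_H$ implicit in equation (\ref{eq:mpc_q_function}), which rolls out under $\dynfnhat$ and $\costhat$ and bootstraps with $\hat{Q}$. The policy $\policyhat$ is the greedy policy with respect to $\hat{T}_H$. The overall strategy has three steps: (i) bound the sup-norm residual $\norm{\hat{T}_H Q^{\policyopt}_{\MDP} - T_H Q^{\policyopt}_{\MDP}}{\infty}$ as a sum of four interpretable error sources; (ii) invoke the $\gamma^H$-contraction of $\hat{T}_H$ in $\ell_\infty$ to turn the operator residual into a bound on $\norm{\hat{Q}^{*} - Q^{\policyopt}_{\MDP}}{\infty}$, yielding the $1/(1-\gamma^H)$ amplification; (iii) apply the standard greedy-policy performance lemma, with the $H$-step operator playing the role of the one-step operator, to convert this into a bound on $\norm{V^{\policyopt}_{\MDP} - V^{\policyhat}_{\MDP}}{\infty}$, which introduces the outer factor of $2$.

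The core technical step is a Kearns--Singh style simulation lemma that peels apart the discrepancy between an $H$-step rollout under $\pair{\dynfnhat}{\costhat}$ and the same rollout under $\pair{\dynfn}{\cost}$. Coupling the two trajectories and telescoping one transition at a time shows that after $t$ steps the joint distributions differ in total variation by at most $t\alpha$. Dualising this with the inequality $\abs{\mathbb{E}_{\dynfnhat}f - \mathbb{E}_{\dynfn}f} \leq \norm{\dynfnhat-\dynfn}{1}\cdot(f_{\max}-f_{\min})/2$ produces four contributions: the dynamics-induced error on intermediate costs is at most $\sum_{t=1}^{H-1}\gamma^t \cdot t\alpha \cdot (c_{\max}-c_{\min})/2$, which after the crude bound $t \leq H$ collapses to $H\alpha \cdot \gamma(1-\gamma^{H-1})/(1-\gamma)\cdot (c_{\max}-c_{\min})/2$; the dynamics-induced error on the terminal value is at most $\gamma^H \cdot H\alpha \cdot (V_{\max}-V_{\min})/2$; the direct cost-approximation $\abs{\costhat-\cost}\leq \alpha$ integrated over the infinite true-MDP rollout of $\policyhat$ contributes $\alpha/(1-\gamma)$; and the terminal value approximation $\norm{\hat{Q}-Q^{\policyopt}_{\MDP}}{\infty}\leq \epsilon$ propagates with discount $\gamma^H$.

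Combining these four pieces with the contraction amplification $1/(1-\gamma^H)$ and the factor of two from the greedy-policy lemma yields exactly the claimed bound. The main obstacle will be keeping the telescoping bookkeeping honest: the coupling argument must produce TV gaps that grow linearly (not exponentially) in $t$, and the discount weights on the cost and terminal contributions must be tracked separately so that intermediate-cost errors pick up the $\sum_{t=1}^{H-1}\gamma^t$ weight while terminal errors pick up only $\gamma^H$. A subtler point is that $\hat{Q}$ is compared against $Q^{\policyopt}_{\MDP}$ rather than against the optimal value of $\MDPhat$; this choice is what allows the final $\epsilon$ term to remain additive and not be further inflated by the model-error terms already absorbed into the simulation-lemma bound.
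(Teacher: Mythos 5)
Your step (i) is exactly the paper's key lemma (Lemma A.2, the $H$-step simulation lemma): the telescoping/coupling argument with total-variation gaps growing linearly to $t\alpha$, dualized against $(f_{\max}-f_{\min})/2$, yields precisely the paper's three model-error terms, and the $\gamma^H\epsilon$ term enters through the greedy comparison on $\MDPhat$. Step (iii) is also the paper's closing move. The problem is step (ii). The MPC policy $\policyhat$ is greedy with respect to a \emph{single} application of the $H$-step lookahead bootstrapped with $\hat{Q}$; the operator $\hat{T}_H$ is never iterated, so there is no fixed point $\hat{Q}^{*}$ in play. If you instead define $\hat{T}_H$ as a genuine operator on Q-functions and pass to its fixed point, that fixed point is $Q^{*}_{\MDPhat}$, which is independent of the initialization $\hat{Q}$ --- the $\epsilon$ term would then disappear from $\norm{\hat{Q}^{*} - Q^{\policyopt}_{\MDP}}{\infty}$, contradicting its presence in the final bound. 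The $1/(1-\gamma^{H})$ factor does not come from contraction toward a fixed point; in the paper it comes from a self-bounding recursion on the \emph{executed policy's} value gap: take the worst-case state $s$ for $\valuepolicy{\policyhat}{\MDP}(s) - \valuepolicy{\policyopt}{\MDP}(s)$, unroll $\valuepolicy{\policyhat}{\MDP}$ for $H$ steps under the true model, observe that the leftover term is $\gamma^{H}\,\mathbb{E}\left[\valuepolicy{\policyhat}{\MDP}(s_{H}) - \valuepolicy{\policyopt}{\MDP}(s_{H})\right] \leq \gamma^{H}\norm{\valuepolicy{\policyhat}{\MDP} - \valuepolicy{\policyopt}{\MDP}}{\infty}$, and rearrange. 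That recursion is what should replace your (ii); it simultaneously does the job you assign to (iii).

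A smaller bookkeeping issue: the cost-approximation error accumulated inside the $H$-step lookahead is $\sum_{t=0}^{H-1}\gamma^{t}\alpha = \frac{1-\gamma^{H}}{1-\gamma}\alpha$, not $\alpha/(1-\gamma)$ over an infinite rollout --- the lookahead only sees $H$ steps. After the $1/(1-\gamma^{H})$ amplification this becomes exactly the theorem's $\frac{\alpha}{1-\gamma}$. As written, you either exempt that one term from the amplification (inconsistent with "combining these four pieces with the contraction amplification") or apply it and land on the looser $\frac{\alpha}{(1-\gamma)(1-\gamma^{H})}$. Everything else --- the four error sources, the linear (not exponential) growth of the TV gap, and the observation that comparing $\hat{Q}$ to $Q^{\policyopt}_{\MDP}$ keeps the $\epsilon$ term additive --- matches the paper's argument.
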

This theorem generalizes over various established results. Setting $H=1, \epsilon=0$ gives us the 1-step simulation lemma in~\cite{kearns2002near} (Appendix~\ref{sec:appendix:onestep}). Setting $\alpha=0$, i.e. true model, recovers the cost-shaping result in~\cite{sun2018truncated}. 
Further inspecting terms in (\ref{eq:hstepadp_loss:v_loss}), we see that the model error \emph{increases} with horizon $H$ (the first two terms) while the learned value error \emph{decreases} with $H$ which matches our intuitions. 
% This matches our intuitions and indicates that there is an optimal planning horizon $H^*$ that minimizes the bound. 

% \begin{theorem}
% \label{thm:optimal_horizon}
% The optimal planning horizon that minimizes the bound in (\ref{eq:hstepadp_loss:v_loss}) is given by
% \begin{equation}
% 	\label{eq:optimal_horizon}
%     H^* = \max\left(1.0, \min\left(H, \frac{\epsilon}{(1-\gamma) (\epsilon + \alpha \frac{V_{\max} - V_{\min}}{4})} \right)\right)
% \end{equation}
% \end{theorem}

In practice, the errors in model and value function are usually unknown and hard to estimate making it impossible to set the MPC horizon to the optimal value. Instead, we next propose a strategy to blend the Q-estimates from MPC and the learned value function at every timestep along the horizon, instead of just the terminal step such that we can properly balance the different sources of error.

\vspace{-2mm}
\subsection{Blending Model Predictive Control and Value Functions}
\vspace{-2mm}
A naive way to blend Q-estimates from MPC with Q-estimates from the value function would be to consider a convex combination of the two
\begin{equation}
    \label{eq:naive_blend}
	(1-\lambda)  \underbrace{\hat{Q}(s, a)}_{\text{model-free}} + \lambda \underbrace{Q^\phi_H(s,a)}_{\text{model-based}}
\end{equation}
where $\lambda \in [0,1]$. Here, the value function is contributing to a residual that is added to the MPC output, an approach commonly used to combine model-based and model-free methods~\citep{lee2020bayesian}. However, this is solution is rather \emph{ad hoc}. If we have at our disposal a value function, why invoke it at only at the first and last timestep? As the value function gets better, it should be useful to invoke it \emph{at all timesteps}. 

Instead, consider the following recursive formulation for the Q-estimate. Given $(s_i, a_i)$, the state-action encountered at horizon $i$, the blended estimate $Q^\lambda (s_i, a_i)$ is expressed as
\begin{equation}
    \label{eq:orig_q_lambda}
	\underbrace{Q^\lambda (s_i, a_i)}_{\text{current blended estimate}} = (1-\lambda) \underbrace{\hat{Q}(s_i, a_i)}_{\text{model-free}} + \lambda (\underbrace{\costhat(s_i, a_i)}_{\text{model-based}} + \gamma \underbrace{Q^\lambda (s_{i+1}, a_{i+1})}_{\text{future blended estimate}}) 
\end{equation}
where $\lambda \in [0,1]$. The recursion ends at $Q^\lambda (s_H, a_H) = \hat{Q}(s_H, a_H)$. In other words, the current blended estimate is a convex combination of the model-free value function and the one-step model-based return. The return in turn uses the future blended estimate. Note unlike (\ref{eq:naive_blend}), the model-free estimate is invoked at \emph{every timestep}.

We can unroll (\ref{eq:orig_q_lambda}) in time to show $Q^\lambda_H(s,a)$, the blended $H-$horizon estimate, is simply an exponentially weighted average of \emph{all horizon} estimates 
\begin{equation}
    \label{eq:q_lambda_estimator}
    Q^\lambda_H (s, a) = (1- \lambda) \sum_{i=0}^{H-1} \lambda^i Q^\phi_i(s, a)  + \lambda^H Q^\phi_H (s,a)
\end{equation}
where $Q^\phi_k(s,a) = \expect{\distTraj{\pi_\phi}{\MDP}}{\sum_{i=0}^{k-1} \gamma^{i}\costhat(s_i, a_i) +  \gamma^k \hat{Q}(s_k, a_k) \mid s_{0}=s, a_{0}=a}$ is a $k$-horizon estimate. When $\lambda = 0$, the estimator reduces to the just using $\hat{Q}$ and when $\lambda = 1$ we recover the original MPC estimate $Q_H$ in (\ref{eq:mpc_q_function}). For intermediate values of $\lambda$, we interpolate smoothly between the two by interpolating all $H$ estimates. 

% Instead of the $H$-horizon Q-estimate $Q_H$\footnote{By abuse of notation, we use $Q_H$ to refer to the sample estimate of the action value given a roll-out}, consider the $k$-horizon estimate
% \begin{equation}
% 	Q_k = \sum_{i=0}^{k-1} \gamma^{i}\costhat(s_i, a_i) +  \gamma^k \hat{Q}(s_k, a_k) 
% \end{equation}
% Note that $k=0$ recovers only the value function $Q_0 = \hat{Q}(s_0, a_0)$. 

% We now define a blended estimator $Q^\lambda_H$ that computes an exponentially weighted average of \emph{all horizon} estimates $\lbrace Q_0, Q_1, \ldots, Q_H \rbrace$ where
% \begin{equation}
%     \label{eq:q_lambda_estimator}
%     Q^\lambda_H () = (1- \lambda) \sum_{i=0}^{H-1} \lambda^i Q_i  + \lambda^H Q_H 
% \end{equation}
% where $\lambda \in [0,1]$ \footnote{This is referred to as the trace-decay parameter in TD($\lambda$).}. When $\lambda = 0$, the estimator reduces to the just using $\hat{Q}$ and when $\lambda = 1$ we recover the original MPC estimate $Q_H$ in (\ref{eq:mpc_q_function}). For intermediate values of $\lambda$, we move smoothly between the two estimates. 

% \begin{equation}
%     \label{eq:q_lambda_estimator}
%     Q^{t} = \sum_{t=0}^{t-1} \gamma^t \costhat(s_t, a_t) + \gamma^H \hat{Q}(s_t, a_t)
% \end{equation}

% \begin{equation}
%     \label{eq:q_lambda_estimator}
%     Q^{\lambda, H} = (1- \lambda) \sum_{t=0}^{H-1} \lambda^t Q^t  + \lambda^H Q^H 
% \end{equation}

Implementing (\ref{eq:q_lambda_estimator}) naively would require running $H$ versions of MPC and then combining their outputs. This is far too expensive. However, we can switch to the disadvantage formulation by applying a similar telescoping trick 
\begin{equation}
    \label{eq:telescoping_q_lambda}
    Q^\lambda_H(s,a) = \expect{\distTraj{\pi_\phi}{\MDP}}{\hat{Q}(s_0, a_0) + \sum_{i=0}^{H-1} (\gamma \lambda)^i A (s_i, a_i)}
\end{equation}

This estimator has a similar form as the $TD(\lambda)$ estimator for the value function. However, while $TD(\lambda)$ uses the $\lambda$ parameter for bias-variance trade-off, our blended estimator aims trade-off bias in dynamics model with bias in learned value function. 
% Similar to~\citet{schulman2015high}, we can interpret this estimator a modified reward function, $A(s,a)$ with a discount of $\gamma\lambda$ which implies an effective horizon of $\ferac{1}{1-\gamma\lambda}$. Setting this equalt to $H^*$ from Theorem~\ref{ref:optimal_horizon} we get a critical $\lambda^{*} = \frac{H-1}{\gamma H^{*}}$

% \begin{corollary}[Proof in Appendix~\ref{sec:appendix:lambda}]
% \label{cr:lambda}
% For approximate models and learnt value functions as defined in Theorem.~\ref{thm:hstep_bound}, there exist a critical $\lambda^* \in [0,1]$ that minimizes the performance bound
% \begin{equation}
% 	\lambda^{*} = \frac{1}{\gamma} \left(1 - \frac{1}{H^*}\right) 
% \end{equation}
% where $H^*$ is stated in (\ref{eq:optimal_horizon}) in Theorem~\ref{thm:optimal_horizon}.
% \end{corollary}
% $\lambda^{*}$ follows a similar trend as $H^*$ -- as learnt value error decreases, $H^* \rightarrow 1$ and $\lambda^* \rightarrow 0$.  

% Why use blending $\lambda$ when one can simply tune horizon $H$? Firstly, $H$ limits the \emph{resolution} we can tune since it's an integer -- as $H$ gets smaller the resolution becomes worse. Secondly, the blended estimator $Q^\lambda_H(s,a)$ uses far more samples. Even if both $Q^\lambda_H$ and $Q^\phi_{H^*}$ had the same bias, the latter uses a strict subset of samples as the former. Hence the variance of the blended estimator will be lower, with high probability. 
Why use blending $\lambda$ when one can simply tune the horizon $H$? First, $H$ limits the \emph{resolution} we can tune since it's an integer -- as $H$ gets smaller the resolution becomes worse. Second, the blended estimator $Q^\lambda_H(s,a)$ uses far more samples. Say we have access to optimal horizon $H^{*}$. Even if both $Q^\lambda_H$ and $Q^\phi_{H^*}$ had the same bias, the latter uses a strict subset of samples as the former. Hence the variance of the blended estimator will be lower, with high probability.

% \boots{discuss connection to TD $\lambda$?}
% We can plug-in the above estimator to our MPC algorithm to modify the problem solved by MPC at every timestep as 

% \begin{equation}
%     \label{eq:mpc_lambda}
%     \phi^{*}_{t} = \argminprob{\phi}\; \expect{\distTraj{\pi_\phi}{\MDPhat}}{Q^{\lambda, H}}\;  
% \end{equation}

% !TEX root = ../main.tex
\vspace{-2mm}
\section{The \algName Algorithm}
\label{sec:algorithm}

\begin{algorithm}[!ht]
	\caption{\algName \label{alg:mpqlam}}
	\SetKwInput{Input}{Input}
	\SetKwInput{Parameter}{Parameter}
	% \SetKwInOut{Output}{Output}
	\Input{ Initial Q-function weights $\theta$, Approximate dynamics $\dynfnhat$ and cost function $\costhat$}
	\Parameter{MPC horizon $H$, $\lambda$ schedule $\left[\lambda_{1}, \lambda_{2}, \ldots \right]$, discount factor $\gamma$, minibatch size $K$, num mini-batches $N$, update frequency $t_{update}$}
	$\mathcal{D} \gets \emptyset$\\ 
	
    \For{$t = 1 \ldots \infty$}{
    \tcp{Update $\lambda$}
    $\lambda = \lambda_{t}$\\
    % $s_t \leftarrow \mathrm{get\_state()}$ \tcp{Get state from system}
    % \tcp{Calculate $\hat{Q}$, the Q-estimate using $s_t, \hat{M}, \theta, H, \lambda$} \\
    % $s_t \leftarrow \mathrm{GET\_STATE()}$
    \tcp{Blended MPC action selection}
    $\phi_{t} \gets \argminprob{\phi}\;\expect{\distTraj{\pi_\phi}{\MDP}}{\hat{Q}_{\theta}(s_0, a_0) + \sum_{i=0}^{H-1} (\gamma \lambda)^i A (s_i, a_i) \mid s_0 = s_t}$
    % Calculate approximation to the Boltzmann distribution eg. Gaussian using $s_t, \hat{\mathcal{M}}, \theta, H, \lambda_t$ \\
    % \tcp{Also calculate $\hat{V}$, the value using $s_t, \hat{\mathcal{M}}, \theta, H, \lambda_t$} \\

    $a_t \sim \policy_{\phi_{t}}$ \\
    Execute $a_t$ on the system and observe $\left(c_t, s_{t+1}\right)$ \\
    $\mathcal{D} \gets \left(s_t, a_t, c_t, s_{t+1}\right)$ \\
    % Append $\hat{V}$, to $\mathcal{D}$ \\
        \If{$t  \% t_{update} == 0$}{
	Sample N minibatches $\left(\left\lbrace s_{k,n}, a_{k,n}, c_{k,n}, s_{k,n}'\right\rbrace_{k=1}^{K}\right)_{n=1}^{N}$ from $\mathcal{D}$ \\
	\tcp{Generate Blended MPC value targets}
	$\hat{y}_{k,n} = c_{k,n} + \gamma \min \expect{\distTraj{\pi_\phi}{\MDP}}{\hat{Q}_{\theta}(s_0, a_0) + \sum_{i=0}^{H-1} (\gamma \lambda)^i A (s_i, a_i) \mid s_0 = s_{k,n}'}$  \\
	Update $\theta$ with SGD on loss $\mathcal{L} = \frac{1}{N}\frac{1}{K}\sum_{n=1}^{N}\sum_{k=1}^{K}\left(\hat{y}_{k,n} - \hat{Q}_{\theta}(s_{k,n}, a_{k,n})\right)^{2} $\\
	}
	}

\end{algorithm}

We develop a simple variant of Q-Learning, called  Model-Predictive Q-Learning with $\lambda$ Weights (\algName) that learns a parameterized Q-function estimate $\hat{Q}_{\theta}$. 
% \boots{need to reference the MPQ paper and discuss the difference} 
Our algorithm, presented in Alg.~\ref{alg:mpqlam}, modifies Q-learning to use blended Q-estimates as described in the (\ref{eq:telescoping_q_lambda}) for both action selection and generating value targets. The parameter $\lambda$ is used to trade-off the errors due to model-bias and learned Q-function, $\hat{Q}_{\theta}$. 
This can be viewed as an extension of the $\algNameMPQ$ algorithm from~\citet{bhardwaj2020information} to explicitly deal with model bias by incorporating the learned Q-function at all timesteps. Unlike $\algNameMPQ$, we do not explicitly consider the entropy-regularized formulation, although our framework can be modified to incorporate soft-Q targets.

At every timestep $t$, \algName proceeds by using $H$-horizon MPC from the current state $s_t$ to optimize a policy $\pi_{\phi}$ with parameters $\phi$. We modify the MPC algorithm to optimize for the greedy policy with respect to the blended Q-estimator in (\ref{eq:telescoping_q_lambda}), that is

\begin{equation}
\label{eq:mpc_prob_algo}
\phi^{*}_{t} = \argminprob{\phi}\;\expect{\distTraj{\pi_\phi}{\MDP}}{\hat{Q}_{\theta}(s_0, a_0) + \sum_{i=0}^{H-1} (\gamma \lambda)^i A (s_i, a_i) \mid s_0 = s_t}
\end{equation}

%  a local approximation of the Q-function is constructed by using the approximate model and $Q_{\theta}$ as described in Eq.~\eqref{eq:q_lambda_estimator} and an action is selected using the greedy policy with respect to $\hat{q}^{\lambda, \policy_{\phi}, H}_{\hat{\MDP}}$ as described in Eq.~\ref{eq:mpc_lambda}. Using this modified q-estimator has several benefits - 1. it allows us to incorporate $Q_{\theta}$ at every step along the model-based rollout horizon, and appropriately setting $\lambda$ can reduce the overall error in the estimate Q value as either of those alone 2. by using long-horizon MPC lookahead we can achieve consistent exploration guided by the model without having to rely on ad-hoc strategies like $\epsilon$-greedy. Alg.~\ref{alg:mpc} describes the MPC scheme where at every timestep, a local policy is optimized iteratively using the approximate MDP $\hat{\MDP}$ and $Q_{\theta}$ with the policy at the previous timestep used to warm-start the process. In practice, the number of MPC iterations $N$ is usually set to 1. Once the actions is executed on the system, the experience tuple $(s_t, a_t, c_t, s_{t+1})$ is added to a replay-buffer. 

An action sampled from the resulting policy is then executed on the system. A commonly used heuristic is to warm start the above optimization by shifting forward the solution from the previous timestep, which serves as a good initialization if the noise in the dynamics in small ~\citep{wagener2019online}. This can significantly cut computational cost by reducing the number of iterations required to optimize (~\ref{eq:mpc_prob_algo}) at every timestep.  %\mohak{Is this explanation better?} %. Often a single iteration is sufficient. 
% \boots{why not? may need to justify}

Periodically, the parameters $\theta$ are updated via stochastic gradient descent to minimize the following loss function with $N$ mini-batches of experience tuples of size $K$ sampled from the replay buffer 

\begin{equation}
	\label{eq:q_learning_loss}
	\mathcal{L(\theta)} =\frac{1}{N}\frac{1}{K}\sum_{n=1}^{N}\sum_{k=1}^{K}\left(\hat{y}_{k,n} - \hat{Q}_{\theta}(s_{k,n}, a_{k,n})\right)^{2}
\end{equation}

The $H$-horizon MPC with blended Q-estimator is again invoked to calculate the targets

\begin{equation}
    \label{eq:q_learning_target}
    \hat{y}_j = c(s_j,a_j) + \gamma \minprob{\phi}\;  \expect{\distTraj{\pi_\phi}{\MDP}}{\hat{Q}_{\theta}(s_0, a_0) + \sum_{i=0}^{H-1} (\gamma \lambda)^i A (s_i, a_i)  \mid s_0 = s_{k,n}'}
\end{equation}

Using MPC to reduce error in Q-targets has been previously explored in literature~\citep{lowrey2018plan,bhardwaj2020information}, where the model is either assumed to be perfect or model-error is not explicitly accounted for. MPC with the blended Q-estimator and an appropriate $\lambda$ allows us to generate more stable Q-targets than using $Q_{\theta}$ or model-based rollouts with a terminal Q-function alone. However, running H-horizon optimization for all samples in a mini-batch can be time-consuming, forcing the use of smaller batch sizes and sparse updates. In our experiments, we employ a practical modification where during the action selection step, MPC is also queried for value targets which are then stored in the replay buffer, thus allowing us to use larger batch sizes and updates at every timestep.

Finally, we also allow $\lambda$ to vary over time. In practice, $\lambda$ is decayed as more data is collected on the system. Intuitively, in the early stages of learning, the bias in $\hat{Q}_{\theta}$ dominates and hence we want to rely more on the model. A larger value of $\lambda$ is appropriate as it up-weights longer horizon estimates in the blended-Q estimator. As $\hat{Q}_{\theta}$ estimates improve over time, a smaller $\lambda$ is favorable to reduce the reliance on the approximate model. 

\vspace{-2mm}
\section{Experiments}\vspace{-2mm}
%We empirically evaluate the efficacy of \algName in overcoming model-bias in MPC by blending in the 

\begin{figure}
\centering
\begin{tikzpicture} 
\begin{scope}
    \clip [rounded corners=.5cm] (0,0) rectangle coordinate (centerpoint) (4cm,4cm); 
    % \node [inner sep=0pt] at (centerpoint) {\includegraphics[trim=9 0 39 60, clip, width=7.0cm]{src/pendulum_snap.png}};
    \node [inner sep=0pt] at (centerpoint) {\includegraphics[scale=0.2]{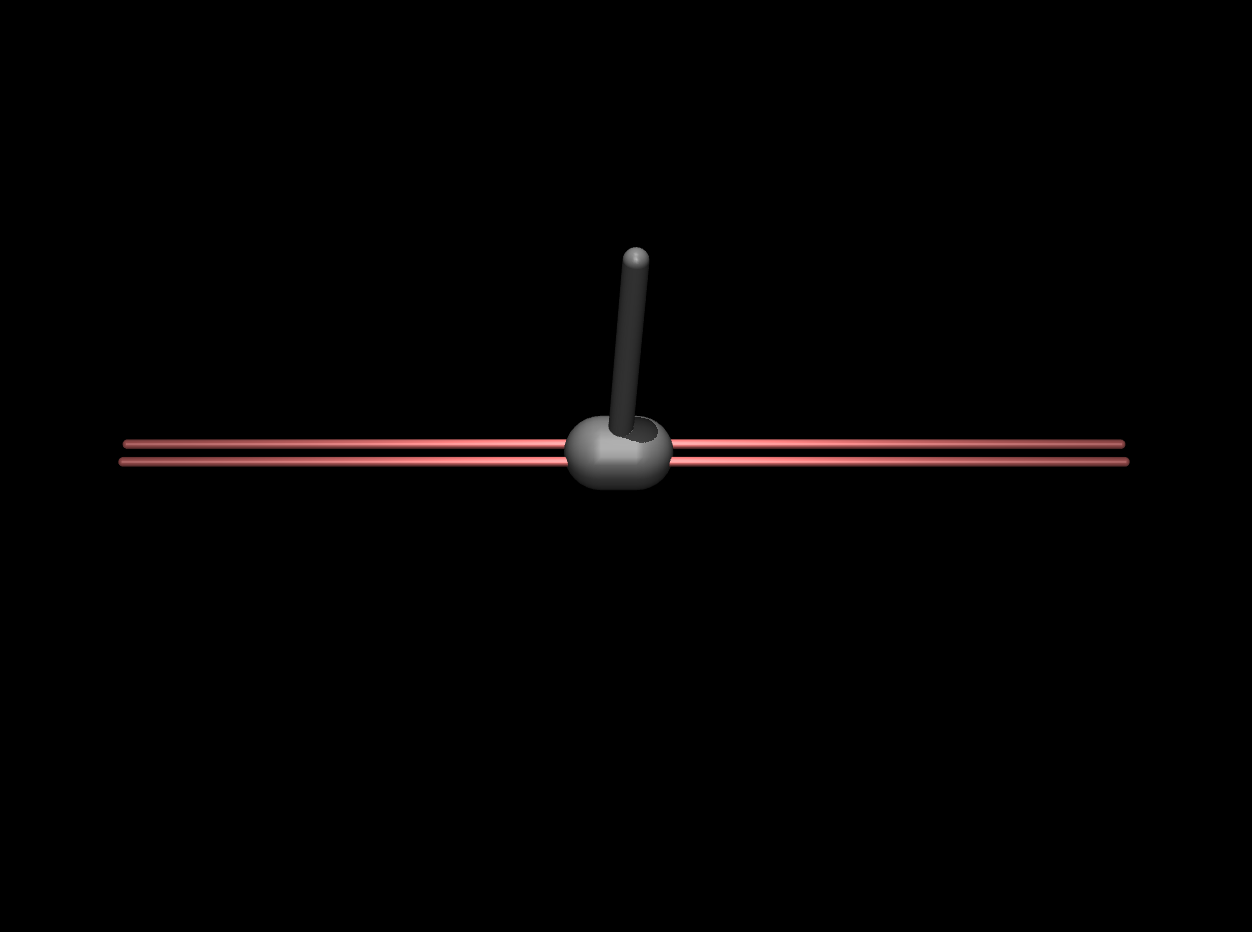}};
    % \draw [black, thick, rounded corners=.5cm] (0,0) rectangle coordinate (centerpoint) (3.5cm,3.5cm);
    % \draw [black, thick, rounded corners=6cm] (0,0) rectangle coordinate (centerpoint) (4cm,4cm)
\end{scope}
% \begin{scope}[xshift=3.7cm]
\begin{scope}[xshift=4.2cm]
    \clip [rounded corners=.5cm] (0,0) rectangle coordinate (centerpoint) ++(4cm,4cm); 
    % ++(3.5cm,3.5cm);
    \node [inner sep=0pt] at (centerpoint) {\includegraphics[scale=0.2]{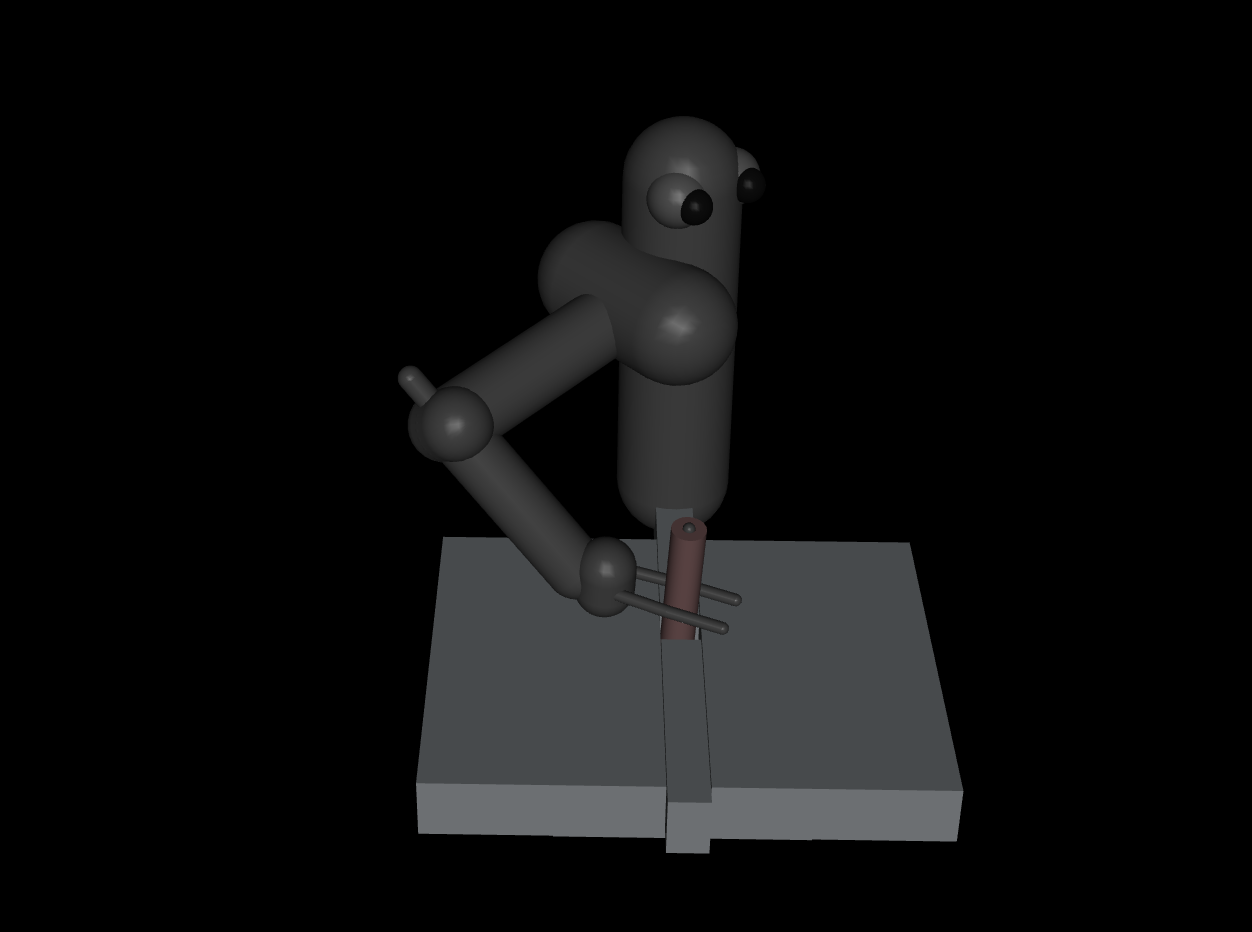}};
    % \draw [black, thick, rounded corners=.5cm] (0,0) rectangle coordinate (centerpoint) (3.5cm,3.5cm);
    % \draw [black, thick, rounded corners=.6cm] (0,0) rectangle coordinate (centerpoint) (4cm,4cm)
\end{scope}

% \begin{scope}[xshift=7.4cm]
\begin{scope}[xshift=8.4cm]
    \clip [rounded corners=.5cm] (0,0) rectangle coordinate (centerpoint) ++(4cm,4cm); 
    \node [inner sep=0pt] at (centerpoint) {\includegraphics[scale=0.2]{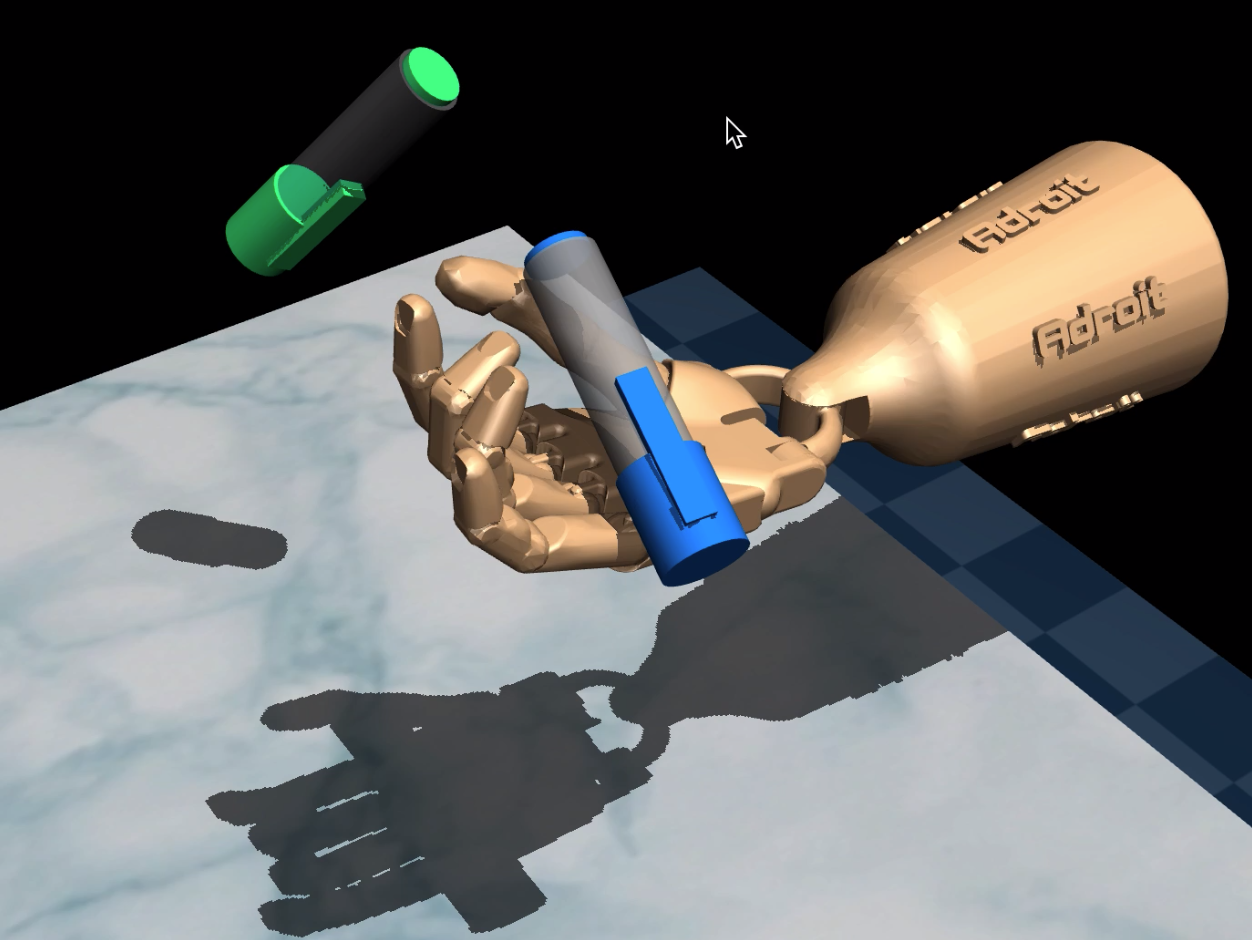}};
    % \draw [black, thick, rounded corners=.6cm] (0,0) rectangle coordinate (centerpoint) (4cm,4cm);
\end{scope}
\end{tikzpicture}
\vspace{-5mm}
\caption{Tasks for evaluating \algName. Left to right - cartpole, peg insertion with 7DOF arm, and in-hand manipulation to orient align pen(blue) with target(green) with 24DOF dexterous hand.\vspace{-2mm} }
\label{fig:environments}
\end{figure}

%\subsection{Experimental Setup}
%\vspace{-2mm}
\textbf{Task Details:} We evaluate $\algName$ on simulated robot control tasks, including a complex manipulation task with a 7DOF arm and in-hand manipulation with a 24DOF anthropomorphic hand~\citep{Rajeswaran-RSS-18} as shown in Fig.~\ref{fig:environments}. For each task, we provide the agent with a biased version of simulation that is used as the dynamics model for MPC. We use Model Predictive Path Integral Control (MPPI)~\citep{williams2017information}, a state-of-the-art sampling-based algorithm as our MPC algorithm throughout.

% The cost function penalizes deviation of the pole from the upright position, the deviation of the cart from the center and the magnitude of the control input.
% A simple cost function that penalizes distance to target is used.

\begin{enumerate}[wide, labelwidth=!, labelindent=0pt]
    \item \cartpole: A classic control task where the agent slides a cart along a rail to swingup the pole attached via an unactuated hinge joint.  Model bias is simulated by providing the agent incorrect masses of the cart and pole. The masses are set lower than the true values to make the problem harder for MPC as the algorithm will always input smaller controls than desired as also noted in~\cite{ramos2019bayessim}. Initial position of cart and pole are randomized at every episode.
    \item \peginsertion: The agent controls a 7DOF Sawyer arm to insert a peg attached to the end-effector into a hole at different locations on a table in front of the robot. We test the effects of inaccurate perception by simulating a sensor at the target location that provides noisy position measurements at every timestep. MPC uses a deterministic model that does not take sensor noise into account as commonly done in controls. This biases the cost of simulated trajectories, causing MPC to fail to reach the target. 
    \item \handpen: A challenging in-hand manipulation task with a 24DOF dexterous hand from~\cite{Rajeswaran-RSS-18}. The agent must align the pen with target orientation within certain tolerance for succcess. The initial orientation of the pen is randomized at every episode. Here, we simulate bias by providing larger estimates of the mass and inertia of the pen as well as friction coefficients, which causes the MPC algorithm to optimize overly aggressive policies and drop the pen. 
\end{enumerate}
\vspace{-2mm}
Please refer to the Appendix~\ref{subsec:further_experiment_details} for more details of the tasks, success criterion and biased simulations.

\textbf{Baselines}: We compare \algName against both model-based and model-free baselines - MPPI with true dynamics and no value function, MPPI with biased dynamics and no value function and Proximal Policy Optimization (PPO)~\citet{schulman2017proximal}. 
% and Soft Actor-Critic (SAC)~\citet{haarnoja2018soft}, two popular model-free RL methods.

\textbf{Learning Details}: We represent the Q-function with a feed-forward neural network. We bias simulation parameters like mass or friction coefficients using the formula $m=(1 + b)m_{true}$, where $b$ is a bias-factor. We also employ a practical modification to Alg.~\ref{alg:mpqlam} in order to speed up training times as discussed in Section~\ref{sec:algorithm}. Instead of maintaining a large replay-buffer and re-calculating targets for every experience tuple in a mini-batch, as done by approaches such as~\citet{bhardwaj2020information,lowrey2018plan}, we simply query MPC for the value targets online and store them in a smaller buffer, which allows us to perform updates at every timestep. 
% This has a similar stabilizing effect as using a frozen target network in DQN and allows us to perform updates at every timestep. 
We use publicly the available implementation at \href{https://bit.ly/38RcDrj}{https://bit.ly/38RcDrj} for PPO. Refer to the Appendix~\ref{subsec:further_experiment_details} for more details.

% and \href{https://bit.ly/38HZ26B}{https://bit.ly/38HZ26B}  and SAC respectively hat has previously been used in literature~\citet{Rajeswaran-RSS-18}. okada2020variational 
\vspace{-2mm}

\begin{figure}[!t]
	\label{fig:cartpole_ablations}
	\centering
	\subfigure[Fixed $\lambda$]{
		\label{fig:cartpole_fixed_lambda}
		\centering
		\includegraphics[trim=0 0 0 20, clip, height=0.25\textwidth, width=0.33\textwidth]{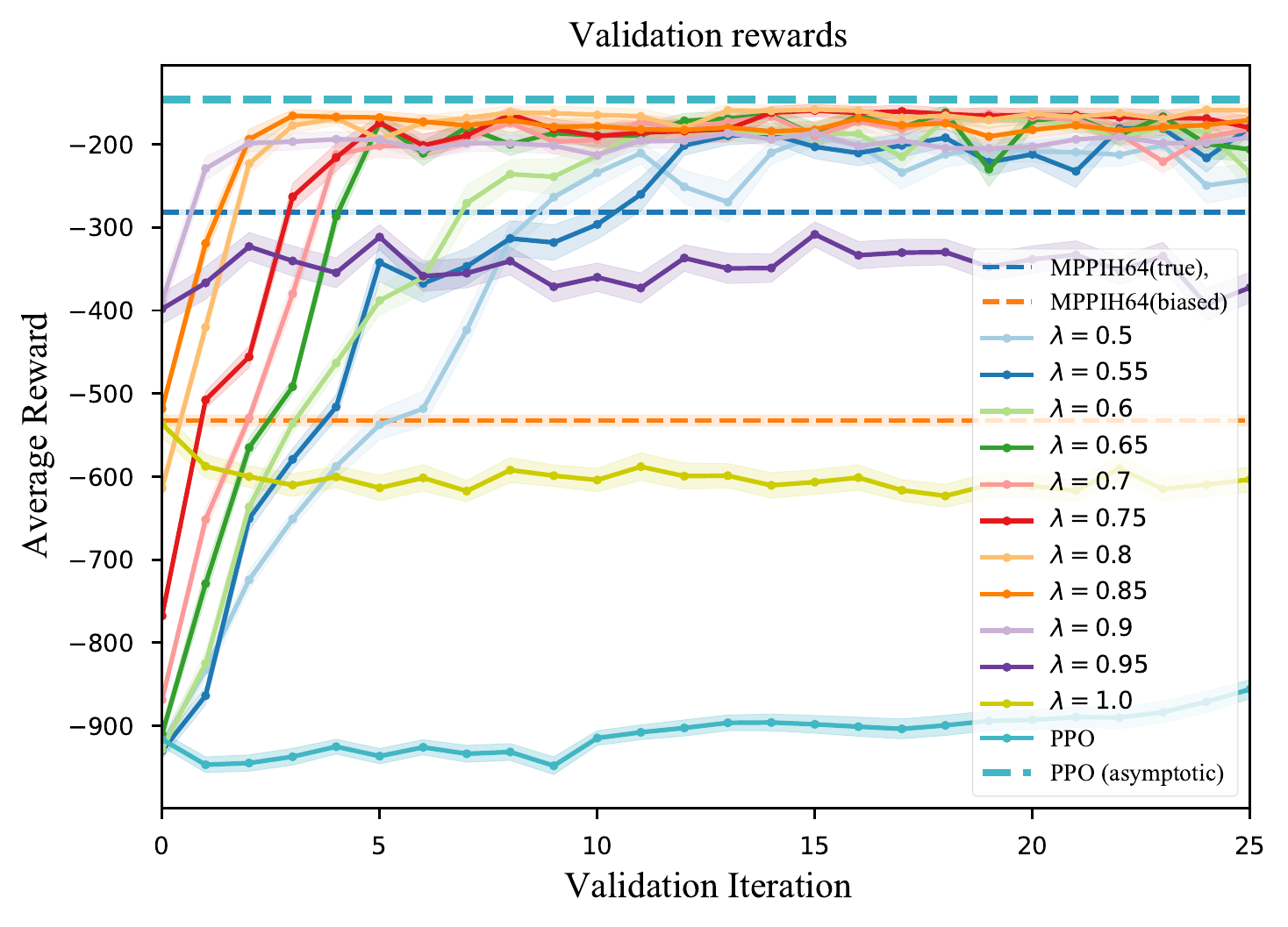}}%
	\subfigure[Fixed v/s Decaying $\lambda$]{
		\label{fig:cartpole_fixed_vs_decaying_lambda}
		\centering
		\includegraphics[trim=0 0 0 20, clip, height=0.25\textwidth, width=0.33\textwidth]{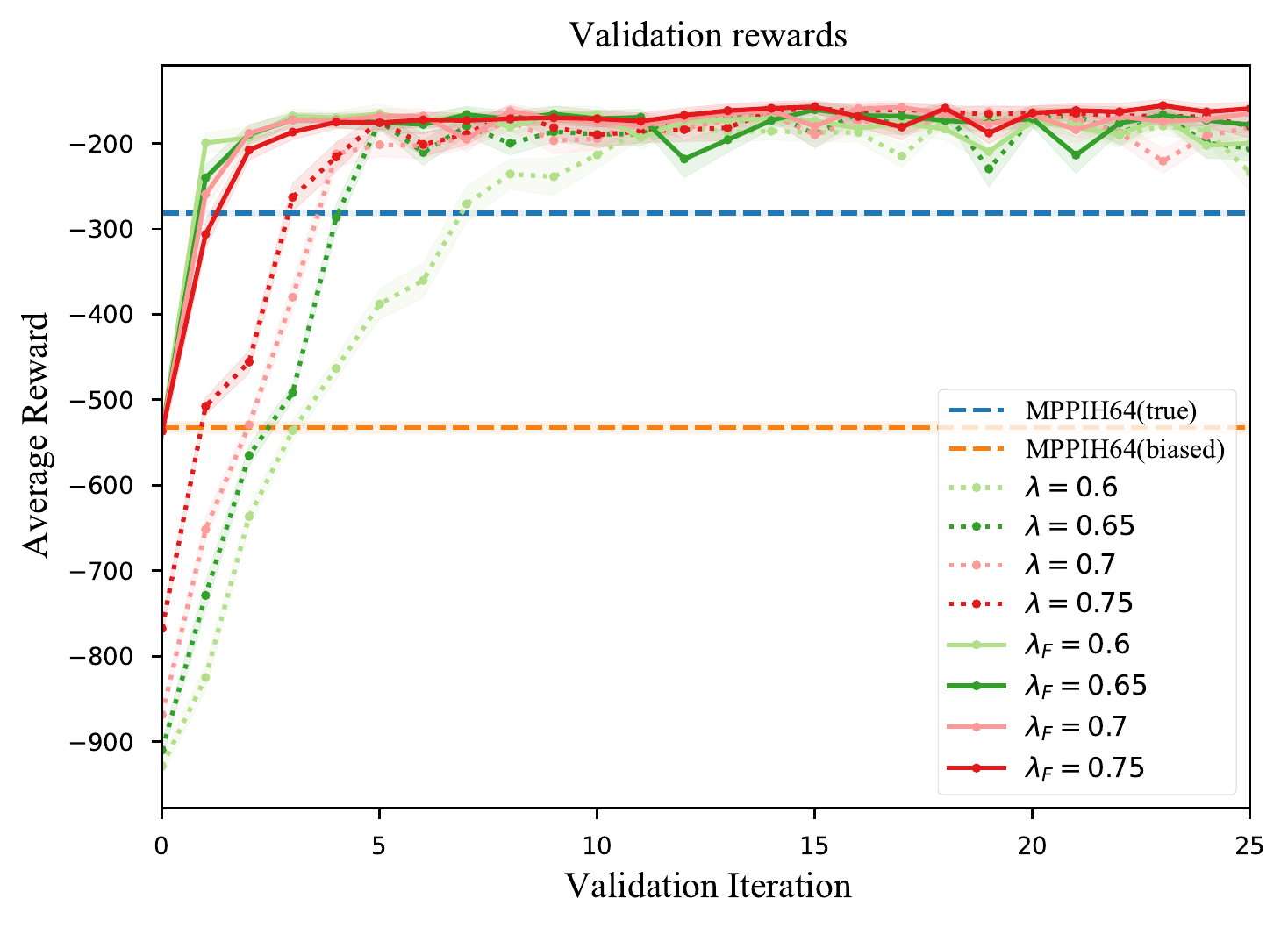}}%	
	\subfigure[Varying Model Bias]{
		\label{fig:cartpole_mass_sweep}
		\centering
		\includegraphics[trim=0 0 0 20, clip, height=0.25\textwidth, width=0.33\textwidth]{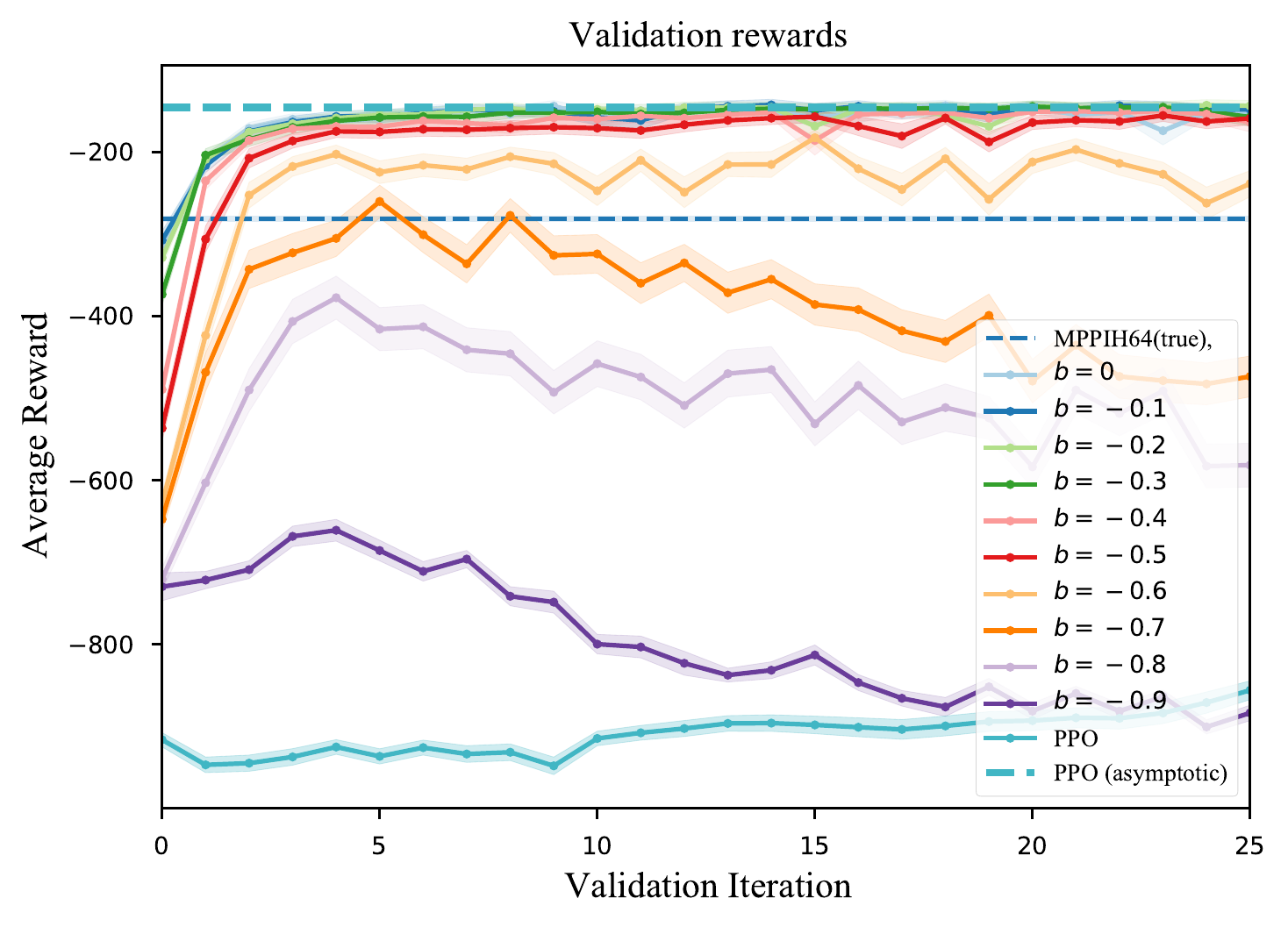}}%

	\centering
	\subfigure[$\lambda$ decay with $H=64$]{
		\label{fig:cartpole_lambda_decay_sweep_H64}
		\centering
		\includegraphics[trim=0 0 0 20, clip, height=0.25\textwidth, width=0.33\textwidth]{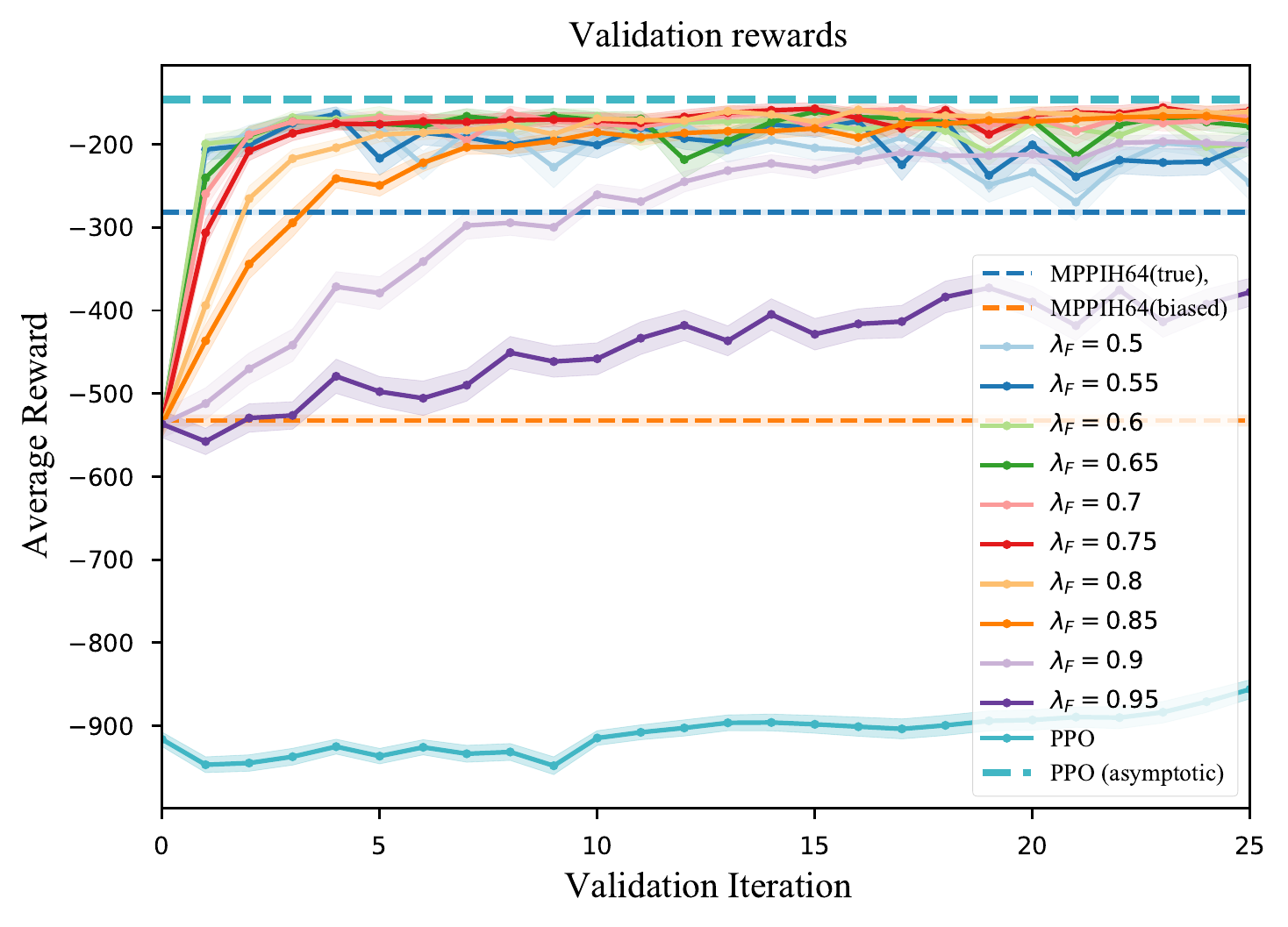}}%	
	\subfigure[$\lambda$ decay with $H=32$]{
		\label{fig:cartpole_lambda_decay_sweep_H32}
		\centering
		\includegraphics[trim=0 0 0 20, clip, height=0.25\textwidth, width=0.33\textwidth]{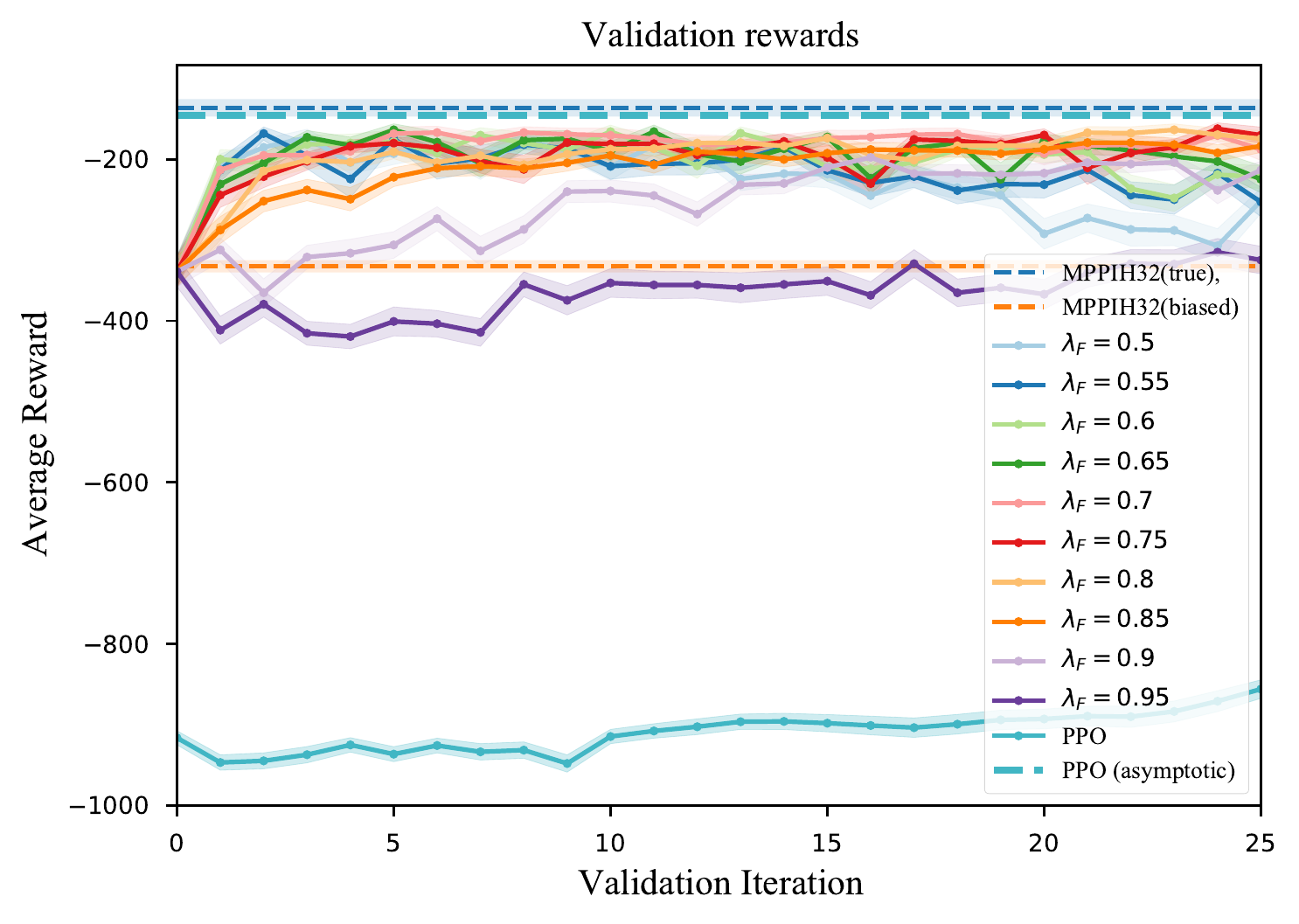}}%
	\subfigure[Varying Horizon v/s $\lambda$ ]{
		\label{fig:cartpole_horizon_sweep}
		\centering
		\includegraphics[trim=0 0 0 20, clip, height=0.25\textwidth, width=0.33\textwidth]{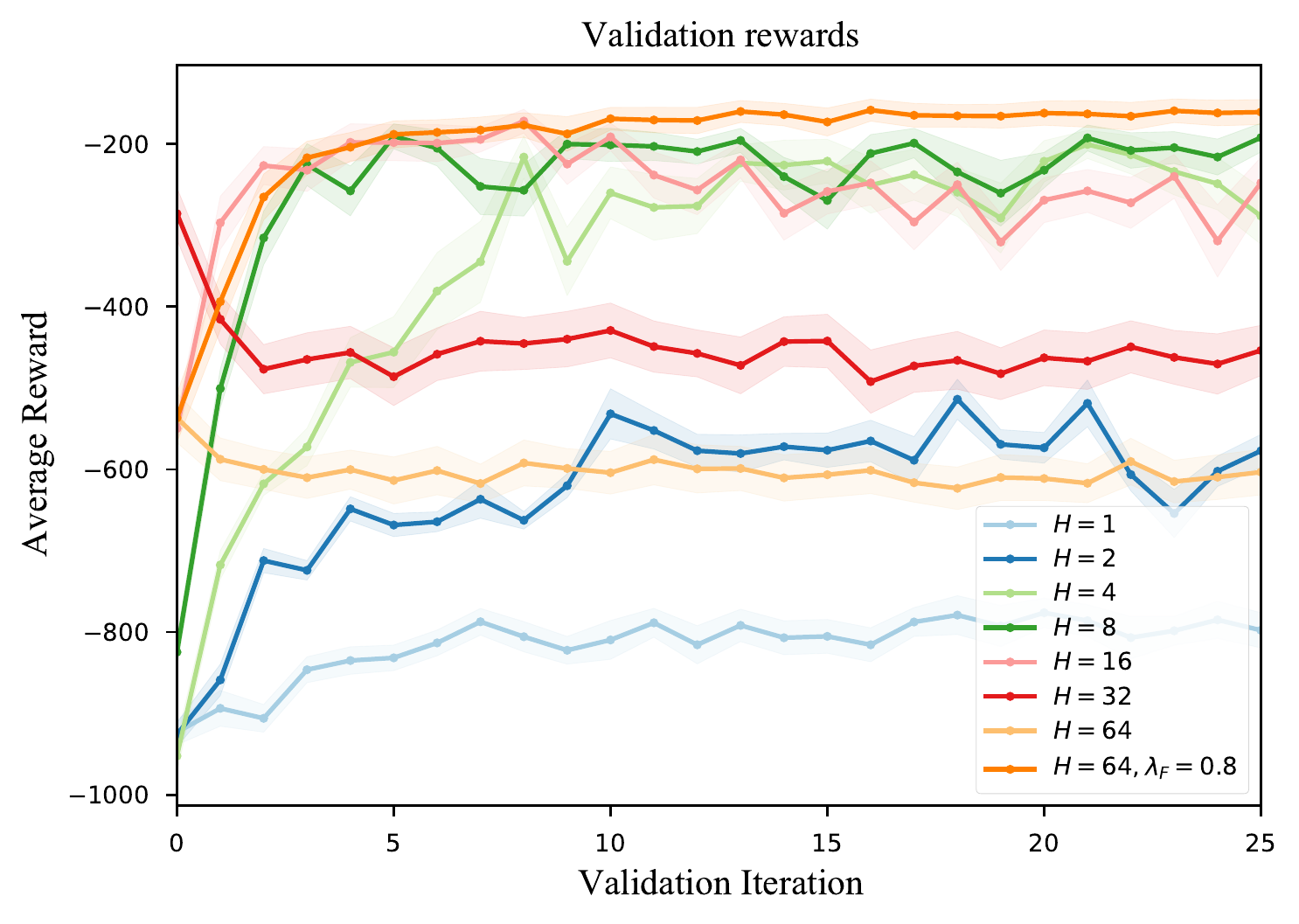}}%
	
	\centering
	\subfigure[Bias-Variance Trade-off]{
		\label{fig:cartpole_bias_variance}
		\centering
		\includegraphics[trim=0 0 0 0, clip, width=\textwidth]{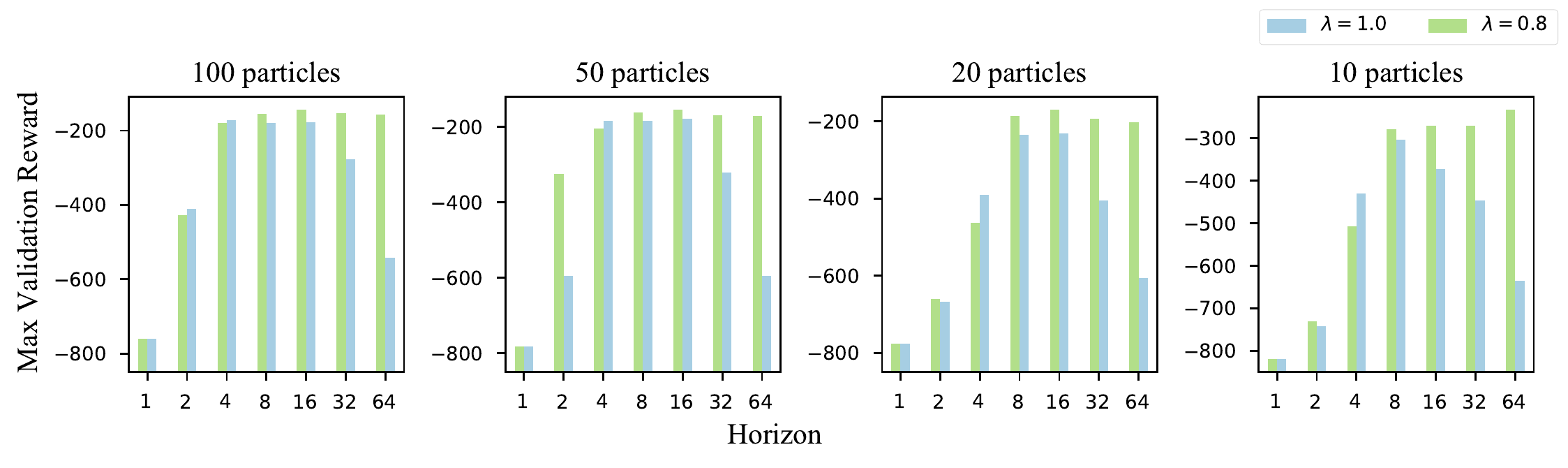}}%
	\vspace{-2mm}
	\caption{\small \cartpole experiments. Solid lines show average rewards over 30 validation episodes (fixed start states) with length of 100 steps and 3 runs with different seeds. The dashed lines are average reward of \mppi for the same validation episodes. Shaded region depicts the standard error of the mean that denotes the confidence on the average reward estimated from finite samples. Training is performed for 100k steps with validation after every 4k steps. When decaying $\lambda$ as per a schedule, it is fixed to the current value during validation. In (b),(d),(e), (f) $\lambda_F$ denotes the $\lambda$ value at the end of training. PPO asymptotic performance is reported as average reward of last 10 validation iterations. (g) shows the best validation reward at the end of training for different horizon values and MPPI trajectory samples (particles) using $\lambda=1.0$ and $\lambda=0.8$\vspace{-5mm}}
	\label{fig:cartpole_plots}
\end{figure}

\subsection{Analysis of Overall Performance}
\begin{observation}\vspace{-2mm}
\algName is able to overcome model-bias in MPC for a wide range of $\lambda$ values.
\end{observation}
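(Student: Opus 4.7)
The observation is empirical rather than mathematical, so the ``proof'' is an experimental demonstration. My plan is to use the \cartpole task as the primary testbed, since it admits clear ground-truth baselines (\mppi with true dynamics) and an easily-controlled source of model bias (misspecified cart and pole masses via the multiplicative factor $m = (1+b)m_{\text{true}}$). I would fix a reasonably severe bias level $b$, fix the MPC horizon $H$, and sweep $\lambda \in \{0, 0.2, 0.4, 0.6, 0.8, 1.0\}$, training \algName independently for each setting with multiple random seeds. The key quantity to report is the average validation return across the episode, plotted against training steps, with shaded standard-error bands to convey confidence.

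The supporting evidence for the observation comes from three comparisons drawn on the same plot. First, \mppi with the true dynamics provides an upper reference line, and \mppi with the biased dynamics provides a lower reference line that quantifies the performance gap caused by bias alone. Second, the $\lambda=1$ variant of \algName should nearly coincide with biased \mppi (since the learned value enters only at the terminal step), while $\lambda=0$ reduces to pure Q-learning off the biased-MPC action-selection, and both extremes are expected to underperform. Third, and most importantly, the intermediate values of $\lambda$ should all close the gap to the true-dynamics baseline, which is precisely what ``wide range of $\lambda$'' means: the set of $\lambda$ that recovers near-oracle performance is not a knife-edge.

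To strengthen the claim beyond a single operating point, I would then repeat the sweep at a second horizon ($H=32$ in addition to $H=64$) and at several bias magnitudes $b$, reusing the fixed-$\lambda$ sweep plus a decaying-$\lambda$ schedule that starts high (favoring the model) and anneals toward zero (favoring the learned Q). The robustness story is made by showing that across these axes the curves for a band of $\lambda$ values cluster near the true-dynamics line, while the naive endpoints do not. I would also include the decayed-$\lambda$ runs to show that the benefit is not tied to a single hand-tuned constant, which further reinforces the ``wide range'' claim.

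The main obstacle is not mathematical but statistical and presentational: MPPI itself is stochastic, validation returns are noisy, and with only a handful of seeds it is easy to produce plots where apparent separation between $\lambda$ values is within noise. I would mitigate this by fixing the set of validation start states across all conditions, running enough seeds to keep standard-error bands tight on the plot, and reporting the gap to the true-dynamics baseline rather than raw return, so the reader can read off the recovery fraction directly. A secondary subtlety is making the comparison to PPO honest: since PPO is model-free and does not have access to the biased model, it should be included as an asymptotic reference so that the reader can see \algName is simultaneously escaping MPC bias and beating pure model-free learning on the same budget.
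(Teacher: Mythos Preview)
Your proposal is correct and matches the paper's own evidence for this observation: a fixed-$\lambda$ sweep on \cartpole with a single bias level ($b=-0.5$) and horizon ($H=64$), plotted against \mppi with true and biased dynamics and PPO's asymptote, showing that intermediate $\lambda$ values (roughly $0.6$--$0.9$) recover or even exceed the true-dynamics baseline while the extremes ($\lambda \approx 1$ and small $\lambda$) underperform. The additional axes you propose---a second horizon, multiple bias magnitudes, and decaying-$\lambda$ schedules---are all in the paper as well, but there they are used to support the \emph{subsequent} observations (faster convergence via decay, robustness to bias magnitude, robustness to $H$), not this one.
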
\vspace{-3mm}
Fig.~\ref{fig:cartpole_fixed_lambda} shows a comparison of \algName with MPPI using true and biased dynamics with $b=-0.5$ and $H=64$ for various settings of $\lambda$. There exists a wide range of $\lambda$ values for which \algName can efficiently trade-off model-bias with the bias in the learned Q-function and out-perform \mppi with biased dynamics. However, setting $\lambda$ to a high value of $1.0$ or $0.95$, which weighs longer horizons heavily leads to poor performance as compounding effects of model-bias are not compensated by $Q_{\theta}$. Performance also begins to drop as $\lambda$ decreases below $0.6$. $\algName$ outperforms \mppi with access to the true dynamics and reaches close to asymptotic performance of PPO. This is not surprising as the learned Q-function adds global information to the optimization and $\lambda$ corrects for errors in optimizing longer horizons.
%due to large variance when using limited trajectory samples in \mppi. 

\begin{observation}
Faster convergence can be achieved by decaying $\lambda$ over time.
\end{observation}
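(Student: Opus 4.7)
The claim is an empirical observation rather than a formal theorem, so my plan is to support it with a combination of a theoretical motivation derived from Theorem~\ref{thm:hstep_bound} and a controlled experimental comparison. The theoretical side goes like this: fix a horizon $H$ and reinterpret the bound in (\ref{eq:hstepadp_loss:v_loss}) as a function of the blending weight $\lambda$ by noting that the blended estimator (\ref{eq:q_lambda_estimator}) is a convex combination of $k$-horizon MPC estimators for $k=0,\ldots,H$. Each $k$-horizon estimator incurs model-error growing roughly like $\alpha k$ and value-error shrinking like $\gamma^k \epsilon$. Taking an expectation under the exponential weights $(1-\lambda)\lambda^k$ yields an effective bias whose model-error component grows with $\lambda$ and whose value-error component grows with $(1-\lambda)$. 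Hence the optimal $\lambda^*$ minimizing this upper bound is a monotonically decreasing function of $\epsilon$: as $\epsilon$ shrinks, the optimal weight on the model-based rollout shrinks.

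The next step is to argue that during training $\epsilon_t := \lVert \hat{Q}_{\theta_t} - Q^{\pi^*}_{\MDP}\rVert_\infty$ is monotonically non-increasing in expectation, which is the standard behavior one expects from the fitted Q-iteration loss (\ref{eq:q_learning_loss}) under the usual contraction assumptions. Combining the previous step with this monotonicity yields a time-dependent schedule $\lambda_t^* = \lambda^*(\epsilon_t)$ that is itself monotonically decreasing. A fixed $\lambda$ is therefore dominated by the schedule in the following sense: for any fixed choice $\lambda_{\text{fix}}$, there is an interval of iterations where $\lambda_{\text{fix}}$ is larger than $\lambda_t^*$ (over-weighting the biased model once the critic is good) and an interval where it is smaller (under-using the model while the critic is still weak). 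The schedule matches $\lambda_t^*$ across both regimes and therefore achieves a smaller per-iteration bound on the policy gap in Theorem~\ref{thm:hstep_bound}, translating into fewer iterations to reach a target performance threshold.

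On the empirical side, I would run the same \cartpole and manipulation tasks already described, holding every hyperparameter fixed except $\lambda$, and compare (i) a sweep over fixed $\lambda\in\{0,0.6,0.8,0.95,1.0\}$, (ii) several decay schedules (linear, exponential, step-wise) parameterized by $\lambda_0$ and a terminal value $\lambda_F$, and (iii) the same schedules run backwards as a sanity check that decay is specifically helpful rather than any annealing. The metric would be the number of environment steps to reach a fixed fraction of the asymptotic reward, averaged over seeds with standard errors reported. Figures~\ref{fig:cartpole_fixed_vs_decaying_lambda}, \ref{fig:cartpole_lambda_decay_sweep_H64}, and \ref{fig:cartpole_lambda_decay_sweep_H32} already realize most of this protocol and should show decaying schedules with the same $\lambda_F$ strictly above the corresponding fixed-$\lambda$ curves for early training iterations.

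The hard part will be the theoretical side: Theorem~\ref{thm:hstep_bound} bounds performance of the $H$-step greedy policy, not the blended one, so I would first need a generalization of the bound to the $Q^\lambda_H$ estimator in (\ref{eq:q_lambda_estimator}). The clean path is to treat the blended estimator as an expectation over a geometric distribution of horizons, push the expectation through the bound, and collect terms. The other subtlety is that $\epsilon_t$ can be non-monotone in practice (bootstrapping instabilities, distribution shift from changing $\pi_\phi$), so the "optimal schedule is monotone" argument must be stated in expectation under an idealized Q-learning operator, with the experimental comparison doing the heavy lifting for the realistic non-ideal case.
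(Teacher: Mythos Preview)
Your proposal is sound in spirit and, on the empirical side, largely overlaps with what the paper actually does to support this observation. However, the route you lay out is substantially more elaborate than the paper's own justification, which is worth flagging.

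The paper treats this observation as a purely empirical claim backed by an informal argument. Its entire ``proof'' is: (i) the one-paragraph intuition that early in training the bias in $\hat{Q}_\theta$ dominates so large $\lambda$ is preferable, while later the (constant) model bias dominates so small $\lambda$ is preferable; and (ii) the comparison in Fig.~\ref{fig:cartpole_fixed_vs_decaying_lambda} between fixed and decaying $\lambda$, with ablations over decay rates in Figs.~\ref{fig:cartpole_lambda_decay_sweep_H64} and~\ref{fig:cartpole_lambda_decay_sweep_H32}. There is no attempt to derive an optimal $\lambda^*(\epsilon)$ from Theorem~\ref{thm:hstep_bound}, no generalization of the bound to the blended estimator, and no monotonicity argument for $\epsilon_t$. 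The decay schedule actually used is the fixed sublinear form $\lambda_t = \lambda_0/(1+\kappa\sqrt{t})$ rather than the linear/exponential/step-wise family you propose, and there is no reversed-schedule sanity check.

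So the difference is one of ambition: your plan would supply a theoretical scaffold the paper never builds, at the price of the technical hurdles you already identify (extending the bound to $Q^\lambda_H$, and the fragility of the $\epsilon_t$-monotone assumption under bootstrapping and policy shift). The paper sidesteps all of that by keeping the claim at the level of an observation and letting Fig.~\ref{fig:cartpole_fixed_vs_decaying_lambda} do the work. If your goal is to match the paper, the intuitive paragraph plus the fixed-vs-decaying comparison is sufficient; the theoretical program you sketch would be a genuine extension rather than a reproduction.
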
\vspace{-3mm}
As more data is collected on the system, we expect the bias in $Q_{\theta}$ to decrease, whereas model bias remains constant. A larger value of $\lambda$ that favors longer horizons is better during initial steps of training as the effect of a randomly initialized $Q_{\theta}$ is diminished due to discounting and better exploration is achieved by forward lookahead. Conversely, as $Q_{\theta}$ gets more accurate, model-bias begins to hurt performance and a smaller $\lambda$ is favorable. We test this by decaying $\lambda$ in $[1.0, \lambda_F]$ using a fixed schedule and observe that indeed faster convergence is obtained by reducing the dependence on the model over training steps as shown in~\ref{fig:cartpole_fixed_vs_decaying_lambda}. Figures~\ref{fig:cartpole_lambda_decay_sweep_H64} and ~\ref{fig:cartpole_lambda_decay_sweep_H32} present ablations that show that \algName is robust to a wide range of decay rates with $H=64$ and $32$ respectively. When provided with true dynamics, \mppi with $H=32$ performs better than $H=64$ due to optimization issues with long horizons. \algName reaches performance comparable with \mppi$H=32$ and asymptotic performance of PPO in both cases showing robustness to horizon values which is important since in practice we wish to set the horizon as large as our computation budget permits. However, decaying $\lambda$ too fast or too slow can have adverse effects on performance. An interesting question for future work is whether $\lambda$ can be adapted in a state-dependent manner. Refer to Appendix~\ref{subsec:further_experiment_details} for details on the decay schedule.

\begin{figure}[!t]
	\label{fig:high_dimensional_results}
	\centering
	\subfigure[\handpen Reward]{
		\label{fig:hand_pen_plots}
		\centering
		\includegraphics[trim=0 0 0 20, clip, width=0.45\textwidth]{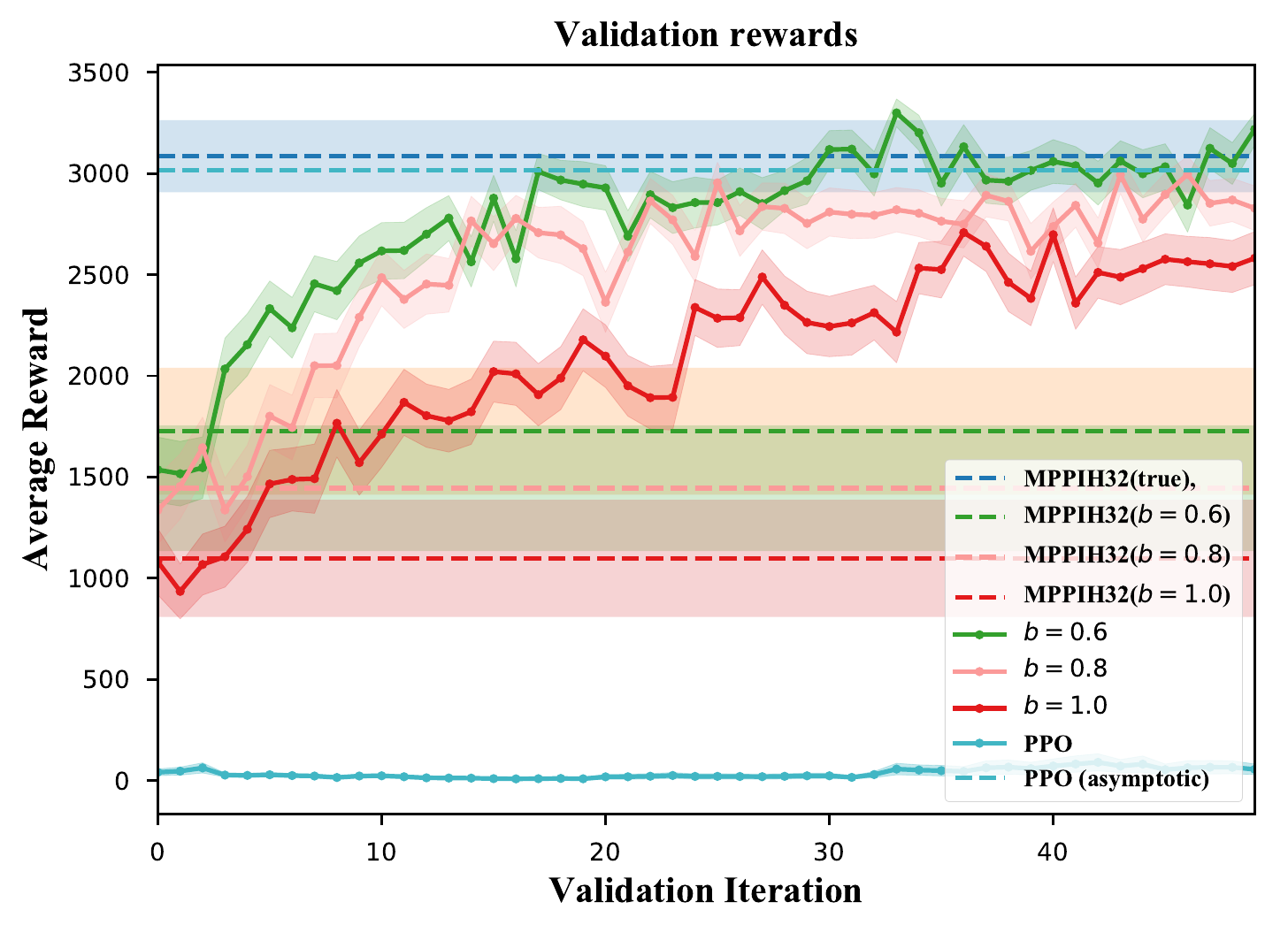}}%	
	\subfigure[\handpen Success Rate]{
		\label{fig:hand_pen_plots_success}
		\centering
		\includegraphics[trim=0 0 0 20, clip, width=0.45\textwidth]{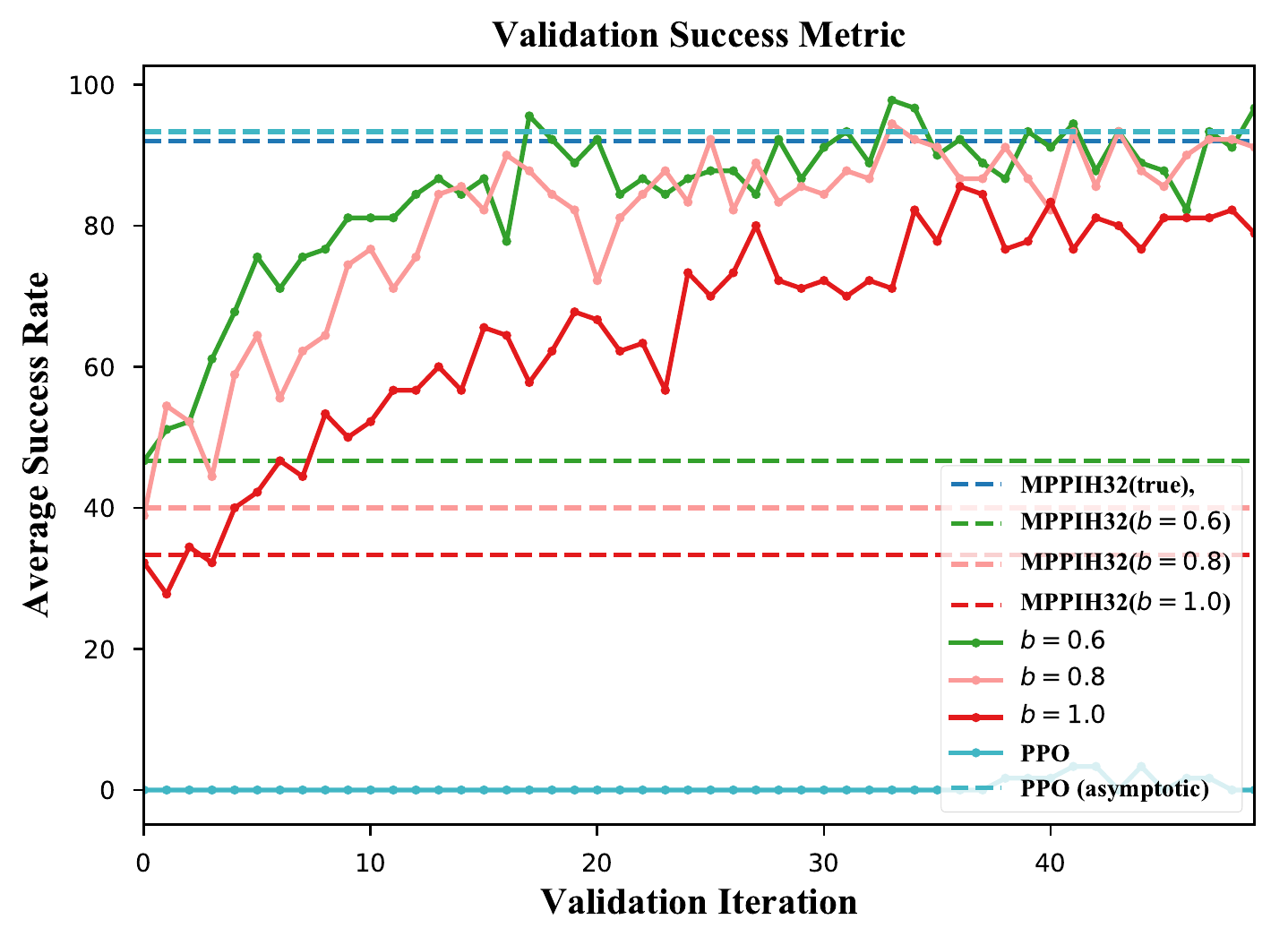}}%

	\centering
	\subfigure[\peginsertion Reward]{
		\label{fig:peg_insertion_plots}
		\centering
		\includegraphics[trim=0 0 0 20, clip, width=0.45\textwidth]{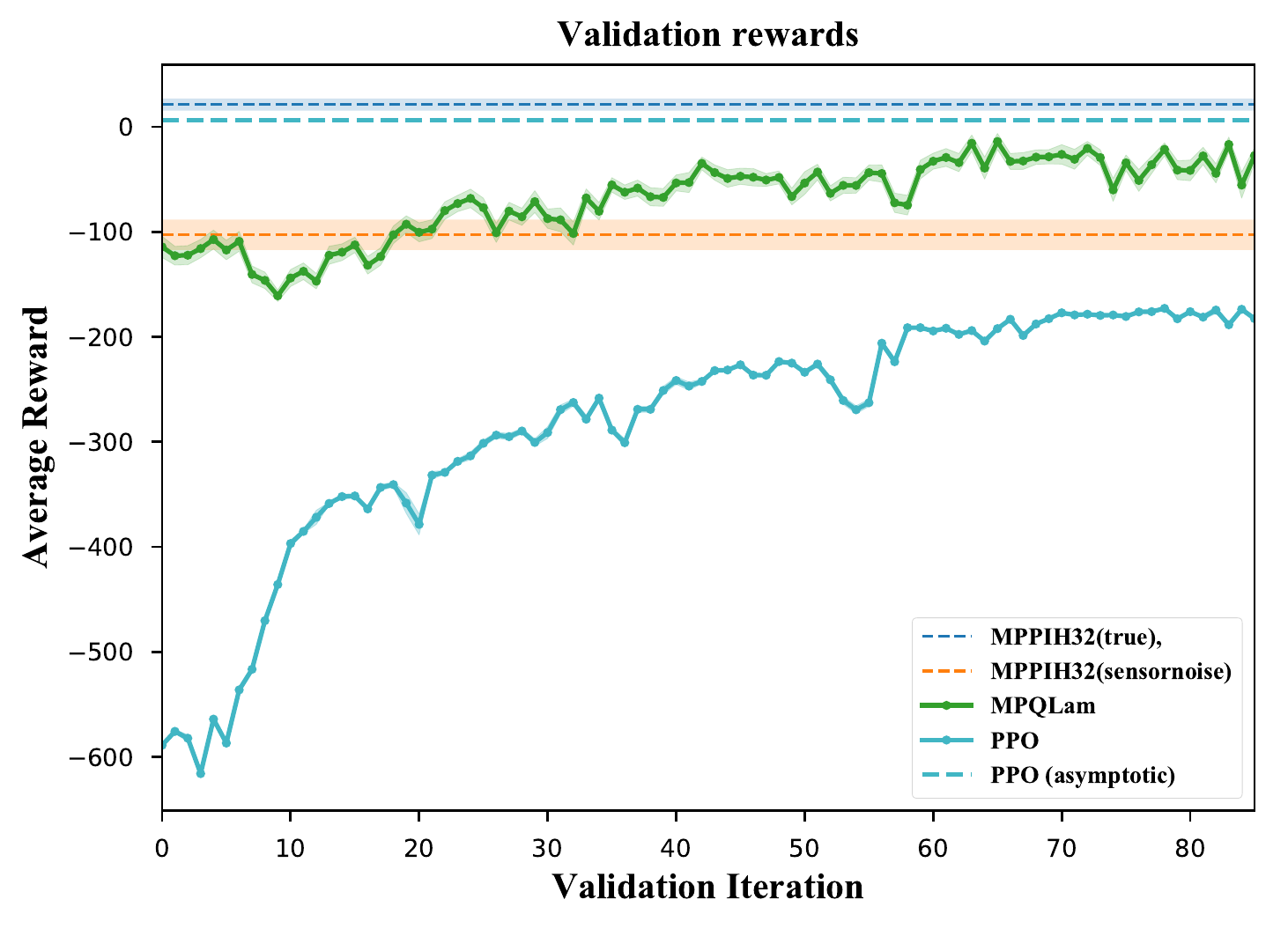}}%	
		\vspace{-2mm}
	\subfigure[\peginsertion Success Rate]{
		\label{fig:peg_insertion_plots_success}
		\centering
		\includegraphics[trim=0 0 0 20, clip, width=0.45\textwidth]{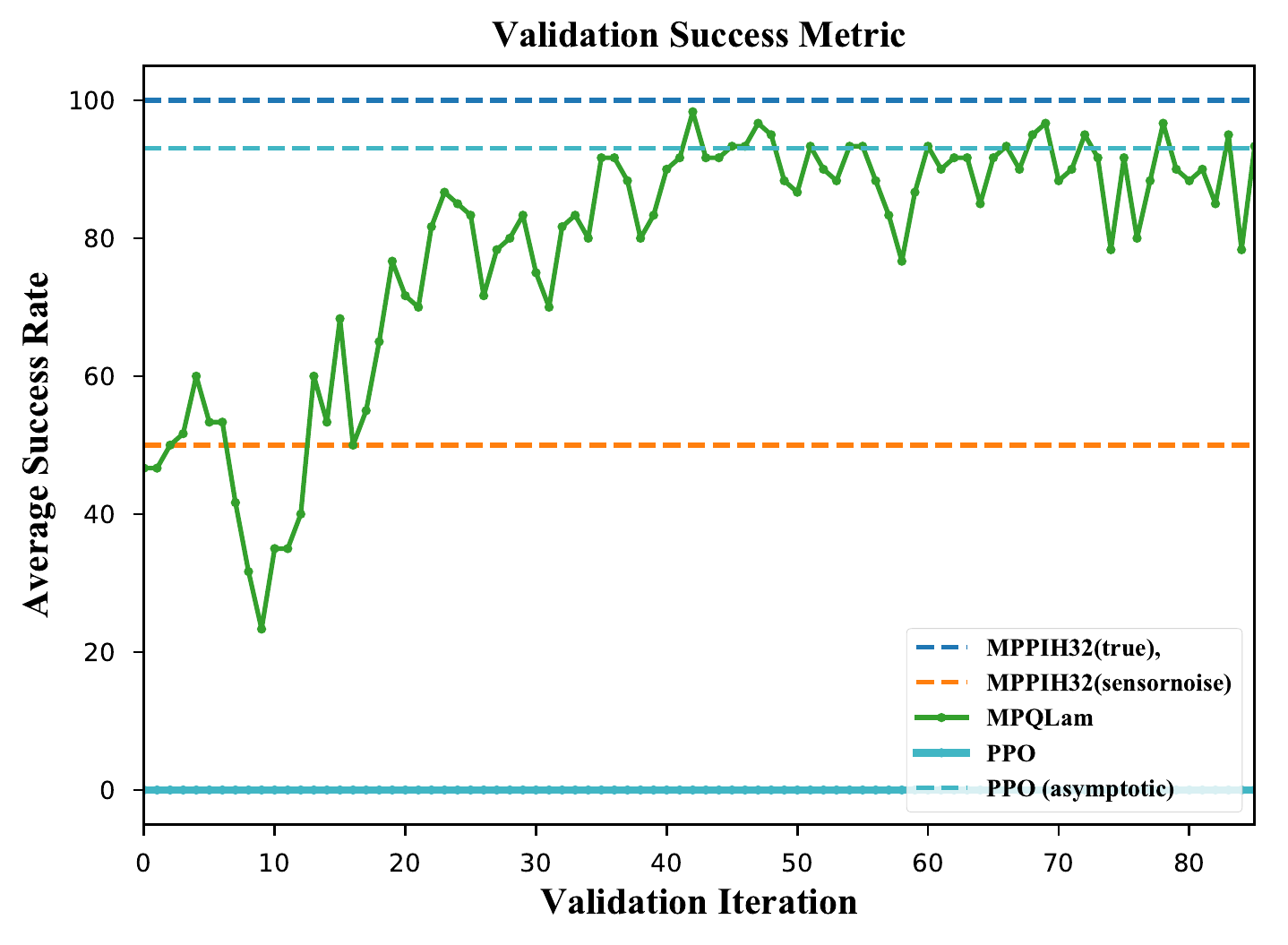}}%	
		\vspace{-2mm}
	\caption{\small Robustness and sample efficiency of \algName. (a),(b) Varying bias factor over mass, inertia and friction of pen (c),(d) Peg insertion with noisy perception. Total episode length is 75 steps for both. Same bias factor $b$ is used for all altered properties per task. Curves depict average reward over 30 validation episodes with multiple seeds and shaded areas are the standard error of the mean. Validation done after every 3k steps and $\lambda$ is decayed to 0.85 at end of 75k training steps in both. Asymptotic performance of PPO is average of last 10 validation iterations. Refer to Appendix~\ref{subsec:further_experiment_details} for details on tasks and success metrics. \vspace{-5mm}}
	\label{fig:more_plots}
\end{figure}

% \begin{observation}
% \algName is much more sample efficient compared to model-free RL on high-dimensional continuous control tasks, even with approximate models.
% \end{observation}\vspace{-3mm}
\begin{observation}
\algName is much more sample efficient compared to model-free RL on high-dimensional continuous control tasks, while maintaining asymptotic performance.
\end{observation}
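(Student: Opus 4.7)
The plan is to support this observation empirically, since it is an experimental claim rather than a theorem. I would validate it on the two high-dimensional manipulation domains already introduced -- \peginsertion (7DOF arm with noisy perception) and \handpen (24DOF dexterous hand with biased mass/inertia/friction) -- and compare \algName against the strongest model-free baseline, PPO, which has access to the true environment and no biased model. The dependent variables I would track are (i) a dense reward curve as a function of environment interactions and (ii) a task-specific success rate (e.g., final peg-to-hole distance within tolerance, pen orientation within angular tolerance for several timesteps). Both are needed because reward alone can be misleading on manipulation tasks where a small improvement in return corresponds to a large qualitative improvement in success.

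To substantiate ``more sample efficient,'' I would first fix a threshold corresponding to, say, a non-trivial success rate and measure the number of environment steps each method needs to first reach it; \algName should cross this threshold considerably earlier than PPO on both tasks. As a secondary metric, the area under the reward/success curve up to a common step budget quantifies the integrated advantage. To substantiate ``maintaining asymptotic performance,'' I would train PPO substantially longer than \algName's budget and report its average reward over the last several validation iterations as a horizontal reference line; the claim is supported if \algName's final reward lies near or above this line despite being run for far fewer samples. The decay schedule on $\lambda$ (from near $1$ toward a smaller $\lambda_F$) should be held fixed across seeds, and both methods evaluated on a common set of validation start states with several random seeds, reporting mean and standard error.

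The main obstacles are methodological rather than mathematical. First, fair accounting of interactions: \algName queries the approximate model many times per real step inside MPPI, so I must be careful that the horizontal axis is real environment steps and not compute, and I should acknowledge in the caption that the trade-off is extra simulator compute for sample efficiency. Second, the asymptotic-parity claim is sensitive to whether PPO has been tuned and run long enough -- using a public reference implementation and a generous training budget mitigates this, but I would still want to show PPO's curve plateauing rather than still rising. Third, variance on high-DOF tasks like \handpen can be large, so enough seeds and validation episodes are required for the standard-error bands to cleanly separate the methods; the existing Figure~\ref{fig:more_plots} with 30 validation episodes per checkpoint is the right template. If these three issues are addressed, the two reward plots and two success-rate plots across the bias-factor sweep together constitute the evidence for the observation.
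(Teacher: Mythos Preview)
Your proposal is correct and takes essentially the same approach as the paper: the observation is supported by the reward and success-rate curves on \peginsertion and \handpen in Figure~\ref{fig:more_plots}, compared against PPO's asymptotic performance reported as a horizontal reference (average of the last 10 validation iterations after a much longer training budget), with 30 validation episodes, multiple seeds, and standard-error bands. The paper does not explicitly compute threshold-crossing times or area-under-curve, relying instead on the visual separation of the curves and the remark that PPO does not improve at all on \handpen over 150k steps, but the underlying experimental design is the same as what you describe.
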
\vspace{-3mm}
Figures 2 and 3 show a comparison of \algName with the model-free PPO baseline. In all cases, we observe that \algName, through its use of approximate models, learned value functions, and a dynamically-varying $\lambda$ parameter to trade-off different sources of error, rapidly improves its performance and achieves average reward and success rate comparable to \mppi with access to ground truth dynamics and model-free RL in the limit. In \handpen, PPO performance does not improve at all over 150k training steps. In \peginsertion, the small magnitude of reward difference between \mppi with true and biased models is due to the fact that despite model bias, MPC is able to get the peg close to the table, but sensor noise inhibits precise control to consistently insert it in the hole. Here, the value function learned by \algName can adapt to sensor noise and allow for fine-grained control near the table. % leading to a success rate comparable to \mppi with true dynamics and model-free RL in the limit.

\begin{observation}
\algName is robust to large degree of model misspecification.
\end{observation}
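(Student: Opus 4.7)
My plan is to bound $\|V^{\pi^*}_{\mathcal{M}} - V^{\hat{\pi}}_{\mathcal{M}}\|_\infty$ by decomposing the sub-optimality into (i) error from rolling out $H$ steps under the approximate dynamics $\hat P$ and cost $\hat c$, (ii) error from using the approximate terminal value function $\hat Q$, and (iii) the cost-perturbation slack that arises because $\hat{\pi}$ is optimal in $\hat{\mathcal{M}}$ rather than $\mathcal{M}$. I would carry this out with the standard triangle-inequality-plus-$H$-step-simulation-lemma pattern, lifting the usual greedy-policy loss bound from the $1$-step to the $H$-step Bellman operator.

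\textbf{$H$-step greedy-policy loss.} First I would introduce the $H$-step Bellman optimality operators $T^H$ on $\mathcal{M}$ and $\hat{T}^H$ on $\hat{\mathcal{M}}$ acting on Q-functions. Both are $\gamma^H$-contractions in $\|\cdot\|_\infty$, and $Q^{\pi^*}_{\mathcal{M}}$ is the unique fixed point of $T^H$. The $H$-step greedy policy $\hat{\pi}$ produced by MPC is then greedy with respect to $\tilde Q := \hat{T}^H \hat Q$. The classical greedy-policy loss lemma, adapted to the $H$-step operator, gives $\|V^{\pi^*}_{\mathcal{M}} - V^{\hat{\pi}}_{\mathcal{M}}\|_\infty \leq \tfrac{2}{1-\gamma^H}\|\tilde Q - Q^{\pi^*}_{\mathcal{M}}\|_\infty + \tfrac{2\alpha}{1-\gamma}$, where the additive slack captures the difference between evaluating $\hat\pi$ in $\hat{\mathcal{M}}$ versus $\mathcal{M}$ and will yield the $\alpha/(1-\gamma)$ summand of the theorem.

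\textbf{Decomposing the $Q$-error.} Next I would split $\|\hat{T}^H \hat Q - T^H Q^{\pi^*}_{\mathcal{M}}\|_\infty \leq \|\hat{T}^H \hat Q - T^H \hat Q\|_\infty + \|T^H \hat Q - T^H Q^{\pi^*}_{\mathcal{M}}\|_\infty$ by the triangle inequality. The second summand is bounded by $\gamma^H \epsilon$ using the $\gamma^H$-contraction of $T^H$ together with $\|\hat Q - Q^{\pi^*}_{\mathcal{M}}\|_\infty \leq \epsilon$, producing the $\gamma^H \epsilon/(1-\gamma^H)$ term after multiplying by the $2/(1-\gamma^H)$ factor from the greedy-loss step. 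The first summand is an $H$-step simulation lemma: I would unroll it step by step, swapping one copy of $\hat P$ for $P$ (and $\hat c$ for $c$) per step and using $\|\hat P(\cdot|s,a) - P(\cdot|s,a)\|_1 \leq \alpha$ together with $|\hat c - c| \leq \alpha$. The per-step dynamics mismatch, amplified by the boundedness range $(c_{\max}-c_{\min})/2$ of intermediate costs, telescopes into $\alpha H (c_{\max}-c_{\min})/2$ with the appropriate $\gamma^i$-weighting, and the terminal-value mismatch contributes $\gamma^H \alpha H (V_{\max}-V_{\min})/2$ through the same mechanism applied to $\hat Q$. Combined with the $1/(1-\gamma^H)$ factor, these produce the first two summands of the stated bound.

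\textbf{Main obstacle.} The hard part will be the bookkeeping in the $H$-step simulation lemma: one must show that the total-variation distance between the length-$i$ trajectory measures $d^{\pi_\phi}_{\mathcal{M}}$ and $d^{\pi_\phi}_{\hat{\mathcal{M}}}$ grows at most as $i\alpha$ (by an inductive telescoping of per-step dynamics errors analogous to the coupling argument in the one-step simulation lemma of~\cite{kearns2002near}), and then keep the $\gamma^i$-weighting aligned so that the precise prefactor $\gamma(1-\gamma^{H-1})/((1-\gamma^H)(1-\gamma))$ appears on the cost-error term rather than a looser $H/(1-\gamma)$. As sanity checks I would verify the two limiting cases the authors mention: $H=1,\epsilon=0$ should reduce to the one-step simulation lemma of~\cite{kearns2002near}, and $\alpha=0$ should reduce to the cost-shaping bound of~\cite{sun2018truncated}.
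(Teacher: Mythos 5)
Your proposal proves the wrong statement. The observation you were asked to address is an \emph{empirical} claim from the experiments section: the paper supports it not with a derivation but with ablation data, sweeping the bias factor $b$ applied to the cart and pole masses (Fig.~\ref{fig:cartpole_mass_sweep}) and to the pen's mass, inertia, and friction coefficients (Figs.~\ref{fig:hand_pen_plots} and~\ref{fig:hand_pen_plots_success}), and showing that \algName with a decaying $\lambda$ attains performance comparable to \mppi with true dynamics across a wide range of $b$, degrading or diverging only for extreme bias (e.g.\ $b=1.0$). What you have written instead is a proof sketch for Theorem~\ref{thm:hstep_bound}, which is a different statement from Section 3, and it cannot substitute for the evidence the observation rests on: the theorem's bound grows linearly in $\alpha$ (and as $\alpha H$ in its leading terms), so if anything it predicts graceful \emph{degradation} with model error rather than robustness to a large degree of misspecification. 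The observation is precisely about the regime the theory does not certify --- and indeed the paper's own conclusion is qualified: for sufficiently uninformative models \algName fails to improve or diverges.

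As a secondary point, even read as a proof attempt for Theorem~\ref{thm:hstep_bound}, your route differs from the paper's. You argue through $H$-step Bellman optimality operators and their $\gamma^H$-contraction, whereas the paper works directly with the $H$-step cost functional $\HStepCost{V}{\pi}{\MDP}{H}$, proves a trajectory-level simulation lemma (Lemma~\ref{lemma:simulation_lemma}) bounding $\HStepCost{\valuepolicy{\policyopt}{\MDP}}{\policy}{\MDP}{H} - \HStepCost{\valuepolicy{\policyopt}{\MDP}}{\policy}{\MDPhat}{H}$ via the total-variation drift of the state distribution, and then closes the recursion at the state of maximum loss. Your operator-theoretic decomposition is plausible and would likely yield a bound of the same shape, but you would need to check that your ``cost-perturbation slack'' actually produces the $\alpha/(1-\gamma)$ summand (in the paper it arises as $\frac{1-\gamma^{H}}{1-\gamma}\alpha$ from the cost-error part of the simulation lemma, divided by the $1-\gamma^{H}$ factor from the final recursion), and that the centering trick with $K = \sum_{i=1}^{H-1}\gamma^{i}(c_{\max}+c_{\min})/2 + \gamma^{H}(V_{\max}+V_{\min})/2$ is reproduced so the range terms appear with the halved widths $(c_{\max}-c_{\min})/2$ and $(V_{\max}-V_{\min})/2$. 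None of this, however, addresses the observation you were assigned.
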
\vspace{-3mm}
Fig.~\ref{fig:cartpole_mass_sweep} shows the effects of different values of the bias factor $b$ used to vary the mass of the cart and pole for \algName with a fixed $\lambda$ decay rate of $\left[1.0, 0.75 \right]$. \algName achieves performance better than \mppi ($H=64$) with true dynamics and comparable to model-free RL in the limit for a wide range of bias factors $b$, and convergence rate is generally faster for smaller bias. For large values of $b$, \algName either fails to improve or diverges as the compounding effects of model-bias hurt learning, making model-free RL the more favorable alternative. A similar trend is observed in Figures~\ref{fig:hand_pen_plots} and~\ref{fig:hand_pen_plots_success} where \algName outperforms \mppi with corresponding bias in the mass, inertia and friction coefficients of the pen with atleast a margin of over 30$\%$ in terms of success rate. It also achieves performance comparable to \mppi with true dynamics and model-free RL in the limit, but is unable to do so for $b=1.0$. 
% Both plots show validation performance over averaged over 3 seeds.
We conclude that while \algName is robust to large amount of model bias, if the model is extremely un-informative, relying on MPC can degrade performance.

\begin{observation}
\algName is robust to planning horizon and number of trajectory samples in sampling-based MPC.
\end{observation}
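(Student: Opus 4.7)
The plan is to support this observation empirically on \cartpole, since the claim is a stability statement across two axes (horizon $H$ and number of sampled trajectories/particles) rather than a mathematical identity. I would fix the bias factor, the $\lambda$-decay schedule, and the training protocol used in the other ablations, and then sweep each axis independently for \algName and for vanilla \mppi (with biased dynamics and no value function), reporting best end-of-training validation reward with standard-error bars over multiple seeds.

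First, I would vary $H \in \{8, 16, 32, 64, 128\}$ with the particle count held fixed. The theoretical intuition, following Theorem~\ref{thm:hstep_bound}, is that vanilla \mppi degrades as $H$ grows because the model-error terms scale with $\alpha H$, and degrades at small $H$ because without a terminal value it becomes myopic. In contrast, the blended estimator $Q^\lambda_H(s,a) = (1-\lambda) \sum_{i=0}^{H-1} \lambda^i Q^\phi_i(s,a) + \lambda^H Q^\phi_H(s,a)$ down-weights horizon-$i$ contributions geometrically, so the \emph{effective} horizon is controlled by $\lambda$ rather than $H$; I would therefore predict and then verify that \algName's reward is approximately flat in $H$ over the swept range, and in particular does not degrade when $H$ is made large. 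I would repeat the sweep for at least two representative $\lambda$ values (e.g., $\lambda = 1$ and $\lambda = 0.8$) to confirm that the flattening is attributable to $\lambda < 1$ rather than to the presence of $\hat{Q}_\theta$ alone.

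Second, I would fix $H$ at a moderate value and sweep the particle count. The key theoretical hook is the variance argument at the end of Section~3.2: at matched bias, the blended estimator aggregates over a strict superset of the rollouts that a single fixed-horizon estimator uses, so it should plateau at a smaller particle count. I would plot mean reward versus particle count and check that \algName saturates sooner than \mppi.

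The main obstacle is decoupling the two axes, since longer $H$ combined with small particle count inflates Monte-Carlo variance in ways that can masquerade as horizon-sensitivity. To address this I would present a joint $(H, \text{particles})$ grid for both methods and check that \algName's performance surface is strictly flatter; I would also separate training-time confounds from steady-state robustness by reading off metrics only once $\hat{Q}_\theta$ has stabilized, so that early-training transients in the value function are not misread as a lack of robustness to $H$.
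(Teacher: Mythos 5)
Your proposal matches the paper's own empirical support for this observation: the paper likewise trains \algName on \cartpole while sweeping the planning horizon and the number of MPPI particles at $\lambda=1.0$ versus $\lambda=0.8$, reports best end-of-training validation reward, and attributes the resulting flatness across both axes to the geometric down-weighting in the blended estimator (plus the companion ablation showing that decaying $\lambda$ beats tuning the discrete horizon). The only minor differences are cosmetic — the paper uses $\lambda=1.0$ itself as the control rather than vanilla \mppi, and notes a caveat at very small horizons ($H=1,2$) that your swept range happens to exclude.
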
\vspace{-3mm}
TD($\lambda$) based approaches are used for bias-variance trade-off for value function estimation in model-free RL. In our framework, $\lambda$ plays a similar role, but it trades-off bias due to the dynamics model and learned value function against variance due to long-horizon rollouts. We empirically quantify this on the \cartpole task by training \algName with different values of horizon and number of particles for $\lambda=1.0$ and $\lambda=0.8$ respectively. Results in Fig.~\ref{fig:cartpole_bias_variance} show that - (1) using $\lambda$ can overcome effects of model-bias irrespective of the planning horizon (except for very small values of $H=1$ or $2$) and (2) using $\lambda$ can overcome variance due to limited number of particles with long horizon rollouts. The ablative study in Fig.~\ref{fig:cartpole_horizon_sweep} lends evidence to the fact that is preferable to simply decay $\lambda$ over time than trying to tune the discrete horizon value to balance model bias. Not only does decaying $\lambda$ achieve a better convergence rate and asymptotic performance than tuning horizon, the performance is more robust to different decay rates (as evidenced from Fig.~\ref{fig:cartpole_lambda_decay_sweep_H64}), whereas the same does not hold for varying horizon.

% [trim=9 4 39 34, clip, width=0.4\textwidth]

% Success metrics: \handpen: pen stays within 15 degrees of target orientation for more than 20 steps (0.2 secs), \peginsertion: peg stays in hole for more than 5 steps (0.1 secs).

% \begin{figure}[!t]
% 	\centering
% 	\subfigure[\cartpole Horizon Sweep]{
% 		\label{fig:cartpole_horizon_sweep}
% 		\centering
% 		\includegraphics[trim=0 0 0 20, clip,  width=0.33\textwidth]{cartpole_horizon_sweep_100_particles.pdf}}%
% 	\subfigure[\handpen]{
% 		\label{fig:pen_horizon_sweep}
% 		\centering
% 		\includegraphics[trim=0 0 0 20, clip, width=0.33\textwidth]{}}%	
% 	\centering
% 	\subfigure[\peginsertion]{
% 		\label{fig:cartpole_variance}
% 		\centering
% 		\includegraphics[trim=0 0 0 20, clip, width=0.33\textwidth]{cartpole_variance_results.pdf}}%	
% 		\vspace{-2mm}
% 	\caption{\small Using $\lambda$-decay versus tuning horizon and bias-variance trade-off. (a)\cartpole. (b) \handpen (c) \cartpole bias-variance \vspace{-5mm}}
% 	\label{fig:more_plots}
% \end{figure}

% \input{inputs/related_work}
%===============================================================================
% !TEX root = ../main.tex
%===============================================================================
\vspace{-2mm}
\section{Conclusion}
\label{sec:conclusion}\vspace{-2mm}
% Model-bias can have catastrophic consequences on the performance of MPC algorithms in practice due to lack of expressivity in hand-engineered solutions and covariate shift issues with learned models. Model-free RL methods attempt to avoid this issue by learning value functions and policies directly from data, but their huge sample complexity makes it dangerous and time-consuming to deploy on safety critical robotic systems. 
In this paper, we presented a general framework to mitigate model-bias in MPC by blending model-free value estimates using a parameter $\lambda$, to systemativally trade-off different sources of error. 
Our practical algorithm 
%is theoretically well-founded and 
achieves performance close to MPC with access to the \emph{true} dynamics and asymptotic performance of model-free methods, while being sample efficient. An interesting avenue for future research is to vary $\lambda$ in a state-adaptive fashion. In particular, reasoning about the model and value function uncertainty may allow us to vary $\lambda$ to rely more or less on our model in certain parts of the state space. Another promising direction is to extend the framework to explicitly incorporate constraints by leveraging different constrained MPC formulations.

\subsubsection*{Acknowledgments}
This work was supported in part by ARL SARA CRA W911NF-20-2-0095. The authors would like to thank Aravind Rajeswaran for help with code for the peg insertion task.

\bibliography{references}

\begin{thebibliography}{28}
\providecommand{\natexlab}[1]{#1}
\providecommand{\url}[1]{\texttt{#1}}
\expandafter\ifx\csname urlstyle\endcsname\relax
  \providecommand{\doi}[1]{doi: #1}\else
  \providecommand{\doi}{doi: \begingroup \urlstyle{rm}\Url}\fi

\bibitem[Abbeel \& Ng(2005)Abbeel and Ng]{abbeel2005exploration}
Pieter Abbeel and Andrew~Y Ng.
\newblock Exploration and apprenticeship learning in reinforcement learning.
\newblock In \emph{Proceedings of the 22nd international conference on Machine
  learning}, pp.\  1--8, 2005.

\bibitem[Abbeel et~al.(2010)Abbeel, Coates, and Ng]{abbeel2010autonomous}
Pieter Abbeel, Adam Coates, and Andrew~Y Ng.
\newblock Autonomous helicopter aerobatics through apprenticeship learning.
\newblock \emph{The International Journal of Robotics Research}, 29\penalty0
  (13):\penalty0 1608--1639, 2010.

\bibitem[Anthony et~al.(2017)Anthony, Tian, and Barber]{anthony2017thinking}
Thomas Anthony, Zheng Tian, and David Barber.
\newblock Thinking fast and slow with deep learning and tree search.
\newblock In \emph{Advances in Neural Information Processing Systems}, pp.\
  5360--5370, 2017.

\bibitem[Bhardwaj et~al.(2020)Bhardwaj, Handa, Fox, and
  Boots]{bhardwaj2020information}
Mohak Bhardwaj, Ankur Handa, Dieter Fox, and Byron Boots.
\newblock Information theoretic model predictive q-learning.
\newblock In \emph{Learning for Dynamics and Control}, pp.\  840--850, 2020.

\bibitem[Chua et~al.(2018)Chua, Calandra, McAllister, and Levine]{chua2018deep}
Kurtland Chua, Roberto Calandra, Rowan McAllister, and Sergey Levine.
\newblock Deep reinforcement learning in a handful of trials using
  probabilistic dynamics models.
\newblock In \emph{Advances in Neural Information Processing Systems}, pp.\
  4754--4765, 2018.

\bibitem[Erez et~al.(2013)Erez, Lowrey, Tassa, Kumar, Kolev, and
  Todorov]{erez2013integrated}
Tom Erez, Kendall Lowrey, Yuval Tassa, Vikash Kumar, Svetoslav Kolev, and
  Emanuel Todorov.
\newblock An integrated system for real-time model predictive control of
  humanoid robots.
\newblock In \emph{2013 13th IEEE-RAS International Conference on Humanoid
  Robots (Humanoids)}, pp.\  292--299. IEEE, 2013.

\bibitem[Haarnoja et~al.(2018)Haarnoja, Zhou, Hartikainen, Tucker, Ha, Tan,
  Kumar, Zhu, Gupta, Abbeel, et~al.]{haarnoja2018soft}
Tuomas Haarnoja, Aurick Zhou, Kristian Hartikainen, George Tucker, Sehoon Ha,
  Jie Tan, Vikash Kumar, Henry Zhu, Abhishek Gupta, Pieter Abbeel, et~al.
\newblock Soft actor-critic algorithms and applications.
\newblock \emph{arXiv preprint arXiv:1812.05905}, 2018.

\bibitem[Kakade et~al.(2003)Kakade, Kearns, and
  Langford]{kakade2003exploration}
Sham Kakade, Michael~J Kearns, and John Langford.
\newblock Exploration in metric state spaces.
\newblock In \emph{Proceedings of the 20th International Conference on Machine
  Learning (ICML-03)}, pp.\  306--312, 2003.

\bibitem[Kearns \& Singh(2002)Kearns and Singh]{kearns2002near}
Michael Kearns and Satinder Singh.
\newblock Near-optimal reinforcement learning in polynomial time.
\newblock \emph{Machine learning}, 49\penalty0 (2-3):\penalty0 209--232, 2002.

\bibitem[Kingma \& Ba(2014)Kingma and Ba]{kingma2014adam}
Diederik~P Kingma and Jimmy Ba.
\newblock Adam: A method for stochastic optimization.
\newblock \emph{arXiv preprint arXiv:1412.6980}, 2014.

\bibitem[Lee et~al.(2020)Lee, Hou, Choudhury, and Srinivasa]{lee2020bayesian}
Gilwoo Lee, Brian Hou, Sanjiban Choudhury, and Siddhartha~S Srinivasa.
\newblock Bayesian residual policy optimization: Scalable bayesian
  reinforcement learning with clairvoyant experts.
\newblock \emph{arXiv preprint arXiv:2002.03042}, 2020.

\bibitem[Lowrey et~al.(2018)Lowrey, Rajeswaran, Kakade, Todorov, and
  Mordatch]{lowrey2018plan}
Kendall Lowrey, Aravind Rajeswaran, Sham Kakade, Emanuel Todorov, and Igor
  Mordatch.
\newblock Plan online, learn offline: Efficient learning and exploration via
  model-based control.
\newblock \emph{arXiv preprint arXiv:1811.01848}, 2018.

\bibitem[Mayne et~al.(2000)Mayne, Rawlings, Rao, and
  Scokaert]{mayne2000constrained}
David~Q Mayne, James~B Rawlings, Christopher~V Rao, and Pierre~OM Scokaert.
\newblock Constrained model predictive control: Stability and optimality.
\newblock \emph{Automatica}, 36\penalty0 (6):\penalty0 789--814, 2000.

\bibitem[Mayne et~al.(2011)Mayne, Kerrigan, Van~Wyk, and Falugi]{mayne2011tube}
David~Q Mayne, Erric~C Kerrigan, EJ~Van~Wyk, and Paola Falugi.
\newblock Tube-based robust nonlinear model predictive control.
\newblock \emph{International Journal of Robust and Nonlinear Control},
  21\penalty0 (11):\penalty0 1341--1353, 2011.

\bibitem[Nagabandi et~al.(2018)Nagabandi, Kahn, Fearing, and
  Levine]{nagabandi2018neural}
Anusha Nagabandi, Gregory Kahn, Ronald~S Fearing, and Sergey Levine.
\newblock Neural network dynamics for model-based deep reinforcement learning
  with model-free fine-tuning.
\newblock In \emph{2018 IEEE International Conference on Robotics and
  Automation (ICRA)}, pp.\  7559--7566. IEEE, 2018.

\bibitem[Rajeswaran* et~al.(2018)Rajeswaran*, Kumar*, Gupta, Vezzani, Schulman,
  Todorov, and Levine]{Rajeswaran-RSS-18}
Aravind Rajeswaran*, Vikash Kumar*, Abhishek Gupta, Giulia Vezzani, John
  Schulman, Emanuel Todorov, and Sergey Levine.
\newblock {Learning Complex Dexterous Manipulation with Deep Reinforcement
  Learning and Demonstrations}.
\newblock In \emph{Proceedings of Robotics: Science and Systems (RSS)}, 2018.

\bibitem[Ramos et~al.(2019)Ramos, Possas, and Fox]{ramos2019bayessim}
Fabio Ramos, Rafael~Carvalhaes Possas, and Dieter Fox.
\newblock Bayessim: adaptive domain randomization via probabilistic inference
  for robotics simulators.
\newblock \emph{arXiv preprint arXiv:1906.01728}, 2019.

\bibitem[Ross \& Bagnell(2012)Ross and Bagnell]{ross2012agnostic}
Stephane Ross and J~Andrew Bagnell.
\newblock Agnostic system identification for model-based reinforcement
  learning.
\newblock \emph{arXiv preprint arXiv:1203.1007}, 2012.

\bibitem[Schulman et~al.(2017)Schulman, Wolski, Dhariwal, Radford, and
  Klimov]{schulman2017proximal}
John Schulman, Filip Wolski, Prafulla Dhariwal, Alec Radford, and Oleg Klimov.
\newblock Proximal policy optimization algorithms.
\newblock \emph{arXiv preprint arXiv:1707.06347}, 2017.

\bibitem[Shyam et~al.(2019)Shyam, Ja{\'s}kowski, and Gomez]{shyam2019model}
Pranav Shyam, Wojciech Ja{\'s}kowski, and Faustino Gomez.
\newblock Model-based active exploration.
\newblock In \emph{International Conference on Machine Learning}, pp.\
  5779--5788, 2019.

\bibitem[Silver et~al.(2016)Silver, Huang, Maddison, Guez, Sifre, Van
  Den~Driessche, Schrittwieser, Antonoglou, Panneershelvam, Lanctot,
  et~al.]{silver2016mastering}
David Silver, Aja Huang, Chris~J Maddison, Arthur Guez, Laurent Sifre, George
  Van Den~Driessche, Julian Schrittwieser, Ioannis Antonoglou, Veda
  Panneershelvam, Marc Lanctot, et~al.
\newblock Mastering the game of go with deep neural networks and tree search.
\newblock \emph{nature}, 529\penalty0 (7587):\penalty0 484, 2016.

\bibitem[Silver et~al.(2017)Silver, Hubert, Schrittwieser, Antonoglou, Lai,
  Guez, Lanctot, Sifre, Kumaran, Graepel, et~al.]{silver2017mastering}
David Silver, Thomas Hubert, Julian Schrittwieser, Ioannis Antonoglou, Matthew
  Lai, Arthur Guez, Marc Lanctot, Laurent Sifre, Dharshan Kumaran, Thore
  Graepel, et~al.
\newblock Mastering chess and shogi by self-play with a general reinforcement
  learning algorithm.
\newblock \emph{arXiv preprint arXiv:1712.01815}, 2017.

\bibitem[Summers et~al.(2020)Summers, Lowrey, Rajeswaran, Srinivasa, and
  Todorov]{summers2020lyceum}
Colin Summers, Kendall Lowrey, Aravind Rajeswaran, Siddhartha Srinivasa, and
  Emanuel Todorov.
\newblock Lyceum: An efficient and scalable ecosystem for robot learning.
\newblock \emph{arXiv preprint arXiv:2001.07343}, 2020.

\bibitem[Sun et~al.(2018)Sun, Bagnell, and Boots]{sun2018truncated}
Wen Sun, J~Andrew Bagnell, and Byron Boots.
\newblock Truncated horizon policy search: Combining reinforcement learning \&
  imitation learning.
\newblock \emph{arXiv preprint arXiv:1805.11240}, 2018.

\bibitem[Wagener et~al.(2019)Wagener, Cheng, Sacks, and
  Boots]{wagener2019online}
Nolan Wagener, Ching-An Cheng, Jacob Sacks, and Byron Boots.
\newblock An online learning approach to model predictive control.
\newblock \emph{arXiv preprint arXiv:1902.08967}, 2019.

\bibitem[Williams et~al.(2016)Williams, Drews, Goldfain, Rehg, and
  Theodorou]{williams2016aggressive}
Grady Williams, Paul Drews, Brian Goldfain, James~M Rehg, and Evangelos~A
  Theodorou.
\newblock Aggressive driving with model predictive path integral control.
\newblock In \emph{2016 IEEE International Conference on Robotics and
  Automation (ICRA)}, pp.\  1433--1440. IEEE, 2016.

\bibitem[Williams et~al.(2017)Williams, Wagener, Goldfain, Drews, Rehg, Boots,
  and Theodorou]{williams2017information}
Grady Williams, Nolan Wagener, Brian Goldfain, Paul Drews, James~M Rehg, Byron
  Boots, and Evangelos~A Theodorou.
\newblock Information theoretic mpc for model-based reinforcement learning.
\newblock In \emph{2017 IEEE International Conference on Robotics and
  Automation (ICRA)}, pp.\  1714--1721. IEEE, 2017.

\bibitem[Zhong et~al.(2013)Zhong, Johnson, Tassa, Erez, and
  Todorov]{zhong2013value}
Mingyuan Zhong, Mikala Johnson, Yuval Tassa, Tom Erez, and Emanuel Todorov.
\newblock Value function approximation and model predictive control.
\newblock In \emph{2013 IEEE symposium on adaptive dynamic programming and
  reinforcement learning (ADPRL)}, pp.\  100--107. IEEE, 2013.

\end{thebibliography}
\bibliographystyle{iclr2021_conference}

\appendix

% !TEX root = ../main.tex

\section{Appendix}
\label{sec:appendix}

\subsection{Proofs}
\label{subsec:proofs}
We present upper-bounds on performance of a greedy policy when using approximate value functions and models. We also analyze the case of finite horizon planning with an approximate dynamics model and terminal value function which can be seen as a generalization of~\citep{sun2018truncated}. For simplicity, we switch to using $\hat{V}(s)$ to the learnt model-free value function (instead of $\hat{Q}(s,a)$)

Let $\hat{V}(s)$ be an $\epsilon$-approximation $\norm{\hat{V}(s) - V^{\policyopt}_{\mathcal{M}}(s)}{\infty} \leq \epsilon$. Let MDP $\MDPhat$ be an $\alpha$-approximation of $\mathcal{M}$ such that $\forall (s,a)$, we have $\norm{ \dynfnhat(s'|s,a) - \dynfn(s'|s,a)}{1} \leq \alpha$ and $\abs{ \costhat(s,a) - \cost(s,a)}  \leq \alpha$.

\subsubsection{A Gentle Start: Bound on Performance of 1-Step Greedy Policy}
\label{sec:appendix:onestep}

\begin{theorem}
Let the one-step greedy policy be
\begin{equation}
	\label{eq:adp_loss:greedy_v}
    \hat{\pi}(s) = \argminprob{a \in \actionSpace}\; \costhat(s, a) + \gamma \Sigma_{s'}\dynfnhat(s'|s,a) \valuehat(s')
\end{equation}

The performance loss of $\hat{\pi}(s)$ w.r.t optimal policy $\pi^*$ on MDP $\mathcal{M}$ is bounded by  

    \begin{equation}
    \label{eq:adp_loss:v_loss}
        \norm{V^{\hat{\pi}}_{\mathcal{M}}(s) - V^{\pi^*}_{\mathcal{M}}(s)}{\infty} \leq \frac{2\left(\gamma \epsilon + \alpha + \gamma\alpha\left(\frac{V_{\max} - V_{\min}}{2}\right) \right) }{1 - \gamma}
    \end{equation}

\end{theorem}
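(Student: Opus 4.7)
The plan is to bound the suboptimality by pivoting through $\hat V$: write
\[
\norm{V^{\hat\pi}_{\MDP} - V^{\pi^*}_{\MDP}}{\infty} \leq \norm{V^{\hat\pi}_{\MDP} - \hat V}{\infty} + \norm{\hat V - V^{\pi^*}_{\MDP}}{\infty},
\]
where the second term is immediately $\leq \epsilon$ by assumption. The work is entirely in the first term, and the intuition is that $\hat\pi$ is the Bellman-greedy policy for $\hat V$ under $\MDPhat$, so its Bellman residual under $\MDP$ should be controlled by the model error $\alpha$ plus the value error $\epsilon$.

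For the first term I would apply the standard Bellman-operator/contraction argument. Writing $T^{\hat\pi}_{\MDP}$ for the policy-evaluation Bellman operator of $\hat\pi$ on the true MDP, one has $V^{\hat\pi}_{\MDP} = T^{\hat\pi}_{\MDP} V^{\hat\pi}_{\MDP}$, and $\gamma$-contraction gives
\[
\norm{V^{\hat\pi}_{\MDP} - \hat V}{\infty} \leq \frac{1}{1-\gamma} \norm{T^{\hat\pi}_{\MDP}\hat V - \hat V}{\infty}.
\]
I would then decompose the one-step Bellman residual of $\hat V$ under $(\cost,\dynfn,\hat\pi)$ by inserting and subtracting the analogous quantity under $(\costhat,\dynfnhat,\hat\pi)$:
\begin{align*}
T^{\hat\pi}_{\MDP}\hat V(s) - \hat V(s)
&= \bigl(\cost(s,\hat\pi(s)) - \costhat(s,\hat\pi(s))\bigr) \\
&\quad + \gamma \sum_{s'} \bigl(\dynfn(s'|s,\hat\pi(s)) - \dynfnhat(s'|s,\hat\pi(s))\bigr)\hat V(s') \\
&\quad + \bigl(T^{\hat\pi}_{\MDPhat}\hat V(s) - \hat V(s)\bigr).
\end{align*}
The first term is bounded by $\alpha$ from the cost assumption. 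The last term equals the Bellman residual of $\hat V$ on $\MDPhat$ under the $\hat V$-greedy policy, which by optimality of $\hat\pi$ in (\ref{eq:adp_loss:greedy_v}) is at most $|T^*_{\MDPhat}\hat V(s) - \hat V(s)|$; using that $\hat V$ is $\epsilon$-close to $V^{\pi^*}_{\MDP}$, one can then further bound this residual by $2(\gamma\epsilon + \alpha + \gamma\alpha(V_{\max}-V_{\min})/2)$ by comparing the two Bellman operators $T^*_{\MDPhat}$ and $T^*_{\MDP}$ and using that $V^{\pi^*}_{\MDP}$ is the fixed point of $T^*_{\MDP}$.

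The main obstacle, and the only nontrivial estimate, is the middle transition-mismatch term $\gamma \sum_{s'}(\dynfn - \dynfnhat)\hat V(s')$. Here I would use the standard ``centering'' trick: since $\dynfn$ and $\dynfnhat$ are both probability distributions over $s'$, subtracting any constant from $\hat V$ leaves the inner product invariant, and choosing the constant $(V_{\max}+V_{\min})/2$ yields
\[
\Bigl|\sum_{s'}(\dynfn - \dynfnhat)(s'|s,a)\,\hat V(s')\Bigr| \leq \tfrac{1}{2}\norm{\dynfn(\cdot|s,a) - \dynfnhat(\cdot|s,a)}{1}\,(V_{\max}-V_{\min}) \leq \alpha\,\tfrac{V_{\max}-V_{\min}}{2}.
\]
Assembling the pieces, the Bellman residual is bounded by $\alpha + \gamma\alpha(V_{\max}-V_{\min})/2 + \gamma\epsilon$, dividing by $1-\gamma$ gives the first half of (\ref{eq:adp_loss:v_loss}), and adding the $\epsilon$ from the pivot term (after absorbing it into the common factor of $2$) recovers the stated bound. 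I would finally cross-check the result as a special case by setting $\alpha=0$ (pure value error) and $\epsilon=0$ (pure model error) to make sure the familiar $\tfrac{2\gamma\epsilon}{1-\gamma}$ and simulation-lemma terms are recovered.
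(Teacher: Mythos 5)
Your overall strategy --- pivot through $\valuehat$ by the triangle inequality, control $\norm{V^{\hat{\pi}}_{\MDP} - \valuehat}{\infty}$ by the one-step Bellman residual of $\valuehat$ under $T^{\hat{\pi}}_{\MDP}$ via $\gamma$-contraction, and split that residual into a model-mismatch part plus a residual under $\MDPhat$ --- is a legitimate and genuinely different route from the paper's, which instead compares $\hat{\pi}$ and $\pi^*$ directly at the state of maximum loss and unrolls the value recursion once; your centering trick for the transition-mismatch term is the same one the paper uses. However, the execution has two problems. First, the bookkeeping is inconsistent: you bound the third term $T^{\hat{\pi}}_{\MDPhat}\valuehat - \valuehat$ by $2\left(\gamma\epsilon + \alpha + \gamma\alpha\tfrac{V_{\max}-V_{\min}}{2}\right)$, but when ``assembling the pieces'' you assert the total residual is $\alpha + \gamma\alpha\tfrac{V_{\max}-V_{\min}}{2} + \gamma\epsilon$, which silently drops that third term. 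Adding your three pieces as stated gives roughly $3\alpha + 3\gamma\alpha\tfrac{V_{\max}-V_{\min}}{2} + 2\gamma\epsilon$ in the numerator before the $\tfrac{1}{1-\gamma}$, which is strictly worse than the theorem in the $\alpha$ terms.

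Second, and more fundamentally, this route cannot recover the stated constant on $\epsilon$. Done carefully, $\abs{T^*_{\MDPhat}\valuehat - \valuehat} \leq \abs{T^*_{\MDPhat}\valuehat - T^*_{\MDP}\valuehat} + \abs{T^*_{\MDP}\valuehat - T^*_{\MDP}\valuepolicy{\policyopt}{\MDP}} + \abs{\valuepolicy{\policyopt}{\MDP} - \valuehat} \leq \left(\alpha + \gamma\alpha\tfrac{V_{\max}-V_{\min}}{2}\right) + \gamma\epsilon + \epsilon$ (using $T^*_{\MDP}\valuepolicy{\policyopt}{\MDP} = \valuepolicy{\policyopt}{\MDP}$): the last $\epsilon$ enters \emph{undiscounted} because you compare $\valuehat$ to $\valuepolicy{\policyopt}{\MDP}$ outside any $\gamma$-weighted expectation, and the same happens again in your initial pivot $\norm{\valuehat - \valuepolicy{\policyopt}{\MDP}}{\infty}\leq\epsilon$. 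Summing everything yields $\frac{2\alpha + 2\gamma\alpha\left(\frac{V_{\max}-V_{\min}}{2}\right) + 2\epsilon}{1-\gamma}$, i.e., $2\epsilon$ where the theorem has $2\gamma\epsilon$. The paper avoids this because it only ever invokes the $\epsilon$-closeness of $\valuehat$ inside the $\gamma\sum_{s'}\dynfnhat(s'|s,\cdot)\valuehat(s')$ terms of the greedy-optimality inequality, so every occurrence of $\epsilon$ picks up a factor of $\gamma$. (A smaller leak of the same kind: you center $\valuehat$, whose range is only known to lie in $[V_{\min}-\epsilon,\,V_{\max}+\epsilon]$, costing an extra $\gamma\alpha\epsilon$; the paper first swaps $\valuehat$ for $\valuepolicy{\policyopt}{\MDP}$ and centers the latter.) Your sanity checks at $\alpha=0$ and $\epsilon=0$ would not catch either issue. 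As written, your argument proves the theorem only with $\gamma\epsilon$ replaced by $\epsilon$; to get the exact statement you need the paper's direct comparison of $\hat{\pi}$ and $\policyopt$.
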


\begin{proof}
From~\eqref{eq:adp_loss:greedy_v} we have $\forall s \in \mathcal{S}$

\begin{equation}
    \begin{aligned}
    \label{eq:adp_loss:reward_bound}
     \costhat(s,\hat{\pi}(s)) + \gamma \sum_{s'} \dynfnhat(s'|s,\hat{\pi}(s)) \valuehat(s') &\leq \costhat(s,\pi^*(s)) + \gamma \sum_{s'} \dynfnhat(s'|s,\pi^*(s)) \valuehat(s') \\
     \costhat(s,\hat{\pi}(s)) -  \costhat(s,\pi^*(s)) &\leq \gamma \left(\sum_{s'} \dynfnhat(s'|s,\pi^*(s)) \valuehat(s') - \sum_{s'} \dynfnhat(s'|s,\hat{\pi}(s)) \valuehat(s')\right) \\
    %  c_{\mathcal{\hat{M}}}(s,\hat{\pi}(s)) -  c_{\mathcal{\hat{M}}}(s,\pi^*(s)) &\leq \gamma \left(\sum_{s'} P_{\mathcal{\hat{M}}}(s'|s,\pi^*(s)) (V^*(s') + \epsilon) - \sum_{s'} P_{\mathcal{\hat{M}}}(s'|s,\hat{\pi}(s)) (V^*(s') - \epsilon) \right)
    \text{(using $\norm{\valuehat(s) - \valuepolicy{\policyopt}{\MDP}(s)}{\infty} \leq \epsilon$)} \\
    \costhat(s,\hat{\pi}(s)) -  \costhat(s,\pi^*(s)) &\leq \gamma \left(\sum_{s'} \dynfnhat(s'|s,\pi^*(s))\valuepolicy{\policyopt}{\MDP}(s') - \sum_{s'} \dynfnhat(s'|s,\hat{\pi}(s))\valuepolicy{\policyopt}{\MDP}(s') \right) + 2\gamma\epsilon \\ 
    \text{(using $\abs{ \costhat(s,a) - \cost(s,a)} \leq \alpha$)} \\
    \cost(s,\hat{\pi}(s)) -  \cost(s,\pi^*(s)) &\leq 2\gamma\epsilon + 2\alpha + \gamma \left(\sum_{s'} \dynfnhat(s'|s,\pi^*(s))\valuepolicy{\policyopt}{\MDP}(s') - \sum_{s'} \dynfnhat(s'|s,\hat{\pi}(s))\valuepolicy{\policyopt}{\MDP}(s') \right) \\
    \end{aligned}
\end{equation}

Now, let $s$ be the state with the max loss $\valuepolicy{\policyhat}{\MDP}(s) - \valuepolicy{\policyopt}{\MDP}(s)$,

\begin{equation*}
    \begin{aligned}
    \valuepolicy{\policyhat}{\MDP}(s) -  \valuepolicy{\policyopt}{\MDP}(s) &= \cost(s, \hat{\pi}) - \cost(s, \pi^*) + \gamma\sum_{s'} \left(\dynfn(s'|s,\hat{\pi})\valuepolicy{\policyhat}{\MDP}(s') - \dynfn(s'|s,\pi^*)\valuepolicy{\policyopt}{\MDP}(s') \right) \\
    \end{aligned}
\end{equation*}
Substituting from~\eqref{eq:adp_loss:reward_bound}
\begin{equation*}
    \begin{aligned}
    \valuepolicy{\policyopt}{\MDP}(s) - \valuepolicy{\policyhat}{\MDP}(s) &\leq 2\gamma\epsilon + 2\alpha + \gamma\sum_{s'}\dynfnhat(s'|s,\policyopt(s))\valuepolicy{\policyopt}{\MDP}(s') - \gamma\sum_{s'} \dynfnhat(s'|s,\hat{\pi}(s))\valuepolicy{\policyopt}{\MDP}(s') \\ &\qquad\qquad\quad - \gamma\sum_{s'} \dynfn(s'|s,\pi^*)\valuepolicy{\policyopt}{\MDP}(s') + \gamma \sum_{s'}\dynfn(s'|s,\hat{\pi})\valuepolicy{\policyhat}{\mathcal{M}}(s')
    \end{aligned}
\end{equation*}

Add and subtract $\gamma\sum_{s'}\dynfn(s'|s,\hat{\pi})\valuepolicy{\policyopt}{\MDP}(s')$ and re-arrange
\begin{equation*}
    \begin{aligned}
    \valuepolicy{\policyopt}{\MDP}(s) - \valuepolicy{\policyhat}{\MDP}(s) &\leq 2\gamma\epsilon + 2\alpha + \gamma\sum_{s'}\left(\dynfnhat(s'|s,\policyopt) - \dynfn(s'|s,\policyopt) \right)\valuepolicy{\policyopt}{\MDP}(s') \\
    &\qquad\qquad\quad - \gamma\sum_{s'}\left(\dynfnhat(s'|s,\policyhat) - \dynfn(s'|s,\policyhat) \right) \valuepolicy{\policyopt}{\MDP}(s') \\
    &\qquad\qquad\quad +\gamma\sum_{s'}\dynfn(s'|s,\policyhat)\left(\valuepolicy{\policyhat}{\MDP}(s') - \valuepolicy{\policyopt}{\MDP}(s') \right) \\
    &\leq 2\gamma\epsilon + 2\alpha + 2\gamma\alpha\left(\frac{V_{\max} - V_{\min}}{2}\right) + \gamma\sum_{s'}\dynfn(s'|s,\hat{\policy})\left(\valuepolicy{\policyhat}{\MDP}(s') - \valuepolicy{\policyopt}{\MDP}(s') \right)
    \end{aligned}
\end{equation*}
Since $s$ is the state with largest loss
\begin{equation*}
    \begin{aligned}
    \norm{\valuepolicy{\policyopt}{\MDP}(s) - \valuepolicy{\policyhat}{\MDP}(s)}{\infty} & \leq 2\gamma\epsilon + 2\alpha + 2\gamma\alpha\left(\frac{V_{\max} - V_{\min}}{2}\right) + \gamma\sum_{s'}\dynfn(s'|s,\hat{\policy}) \norm{\valuepolicy{\policyopt}{\MDP}(s) - \valuepolicy{\policyhat}{\MDP}(s)}{\infty} \\
    & \leq 2\gamma\epsilon + 2\alpha + 2\gamma\alpha\left(\frac{V_{\max} - V_{\min}}{2}\right) + \gamma \norm{\valuepolicy{\policyopt}{\MDP}(s) - \valuepolicy{\policyhat}{\MDP}(s)}{\infty} \\
    \end{aligned}
\end{equation*}

Re-arranging terms we get 
\begin{equation}
    \norm{\valuepolicy{\policyopt}{\MDP}(s) - \valuepolicy{\policyhat}{\MDP}(s)}{\infty} \leq \frac{2\left(\gamma \epsilon + \alpha + \gamma\alpha\left(\frac{V_{\max} - V_{\min}}{2}\right) \right)}{1 - \gamma}
\end{equation}
which concludes the proof.
\end{proof}

\subsubsection{Bound on Performance of H-Step Greedy Policy}
\label{sec:appendix:hstep}
\textbf{Notation:} For brevity let us define the following macro,
\begin{equation}
    \label{eq:hstepcostmacro}
    \HStepCost{V}{\pi}{\mathcal{M}}{H} = \expect{\distTraj{\pi}{\MDP}}{\sum_{i=0}^{H-1} \gamma^{i} c(s_i,a_i) + \gamma^H V(s_{H})}
\end{equation} 
which represents the expected cost achieved when executing policy $\pi$ on $\MDP$ using $V$ as the terminal cost. We can substitute different policies, terminal costs and MDPs. For example, $\HStepCost{\valuehat}{\policyhat}{\MDPhat}{H}$ is the expected cost obtained by running policy $\policyhat$ on simulator $\MDPhat$ for $H$ steps with approximate learned terminal value function $\valuehat$.

\begin{lemma}
\label{lemma:simulation_lemma}
For a given policy $\policy$, the optimal value function $\valuepolicy{\policyopt}{\MDP}$ and MDPs $\MDP, \MDPhat$ the following performance difference holds 

\begin{equation*}
\norm{\HStepCost{\valuepolicy{\policyopt}{\MDP}}{\policy}{\MDP}{H} - \HStepCost{\valuepolicy{\policyopt}{\MDP}}{\policy}{\MDPhat}{H}}{\infty} \leq \gamma \left(\frac{1 - \gamma^{H-1}}{1 - \gamma}\right) \alpha H \left(\frac{c_{\max} - c_{\min}}{2}\right) + \gamma^{H}\alpha H \left(\frac{V_{\max} - V_{\min}}{2}\right) + \frac{1 - \gamma^{H}}{1 - \gamma}\alpha
\end{equation*}
 
\end{lemma}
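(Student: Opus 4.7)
The plan is to bound $\HStepCost{\valuepolicy{\policyopt}{\MDP}}{\policy}{\MDP}{H} - \HStepCost{\valuepolicy{\policyopt}{\MDP}}{\policy}{\MDPhat}{H}$ pointwise in the starting state and then take $\norm{\cdot}{\infty}$. By linearity of expectation, I would decompose the difference into three error channels: per-step cost-function approximation, dynamics-induced drift of the state--action occupancy integrated against the running cost, and the same drift integrated against the terminal value. Writing $d_i^{\policy,\MDP}$ and $d_i^{\policy,\MDPhat}$ for the step-$i$ state--action occupancies when $\policy$ is rolled out on $\MDP$ and $\MDPhat$ respectively (from the same starting state), inserting $\pm\expect{d_i^{\policy,\MDP}}{\costhat(s,a)}$ in each step's contribution peels off a pointwise cost-approximation piece, bounded by $\alpha$, from a distribution-mismatch piece that integrates $\costhat$ against two different occupancies.

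The main ingredient is an $\ell_1$ bound on the occupancy divergence. I would first use the standard inequality $\abs{\expect{p}{f} - \expect{q}{f}} \leq \frac{1}{2}(f_{\max} - f_{\min}) \norm{p - q}{1}$, obtained by recentering $f$ at its midpoint so that only the zero-mean signed measure $p - q$ remains, to reduce both distribution-mismatch pieces to bounds on $\norm{d_i^{\policy,\MDP} - d_i^{\policy,\MDPhat}}{1}$ (which equals the corresponding state-occupancy $\ell_1$ distance since the shared policy cancels). An induction on $i$ then gives $\norm{d_i^{\policy,\MDP} - d_i^{\policy,\MDPhat}}{1} \leq i\alpha$: the base case is trivial; the inductive step, after adding and subtracting $\sum_{s,a} d_i^{\policy,\MDP}(s,a)\dynfnhat(\cdot|s,a)$ inside $d_{i+1}^{\policy,\MDP} - d_{i+1}^{\policy,\MDPhat}$, picks up one extra $\alpha$ from $\norm{\dynfn(\cdot|s,a) - \dynfnhat(\cdot|s,a)}{1} \leq \alpha$ and invokes the $\ell_1$-nonexpansivity of the stochastic kernel $\dynfnhat$ (rows sum to one) to carry the hypothesis $i\alpha$ forward.

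Assembling the three pieces, the cost-approximation terms sum to $\sum_{i=0}^{H-1}\gamma^i \alpha = \frac{1-\gamma^H}{1-\gamma}\alpha$ (the third summand of the claim); the running-cost distribution mismatch is at most $\sum_{i=0}^{H-1}\gamma^i \cdot i\alpha \cdot \frac{c_{\max}-c_{\min}}{2}$, which after the loose bound $i \leq H$ and dropping the vanishing $i=0$ term becomes $\gamma\frac{1-\gamma^{H-1}}{1-\gamma} H\alpha \frac{c_{\max}-c_{\min}}{2}$ (the first summand); and the terminal-value mismatch contributes at most $\gamma^H H\alpha \frac{V_{\max}-V_{\min}}{2}$ (the second summand). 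Each bound is uniform in the starting state, so taking $\norm{\cdot}{\infty}$ preserves them. I do not anticipate a deep obstacle, since this is a finite-horizon refinement of the classical simulation lemma; the only delicate bookkeeping is preserving the $\frac{1}{2}$ factors when converting an $\ell_1$ distance to an expectation difference and verifying that the slack $i \leq H$ inside the geometric sum delivers precisely the $\gamma\frac{1-\gamma^{H-1}}{1-\gamma}$ coefficient stated in the lemma.
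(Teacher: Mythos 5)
Your proposal is correct and follows essentially the same route as the paper's proof: split the difference into a cost-approximation piece (giving $\tfrac{1-\gamma^{H}}{1-\gamma}\alpha$) and a dynamics-induced distribution-drift piece, bound the drift in $\ell_1$ by $H\alpha$, and recenter the integrand at its midpoint to obtain the $\tfrac{c_{\max}-c_{\min}}{2}$ and $\tfrac{V_{\max}-V_{\min}}{2}$ factors. The only difference is presentational --- you work with per-step occupancies and an explicit induction giving $i\alpha$ before loosening to $H\alpha$, whereas the paper works with the full trajectory distribution via an intermediate MDP $\mathcal{M}'$ and a single constant $K$ --- so no further comment is needed.
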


\begin{proof}
We temporarily introduce a new MDP $\mathcal{M'}$ that has the same cost function as a $\MDP$, but transition function of $\MDPhat$ 
\begin{equation}
\label{eq:value_m_mhat}
    \begin{aligned}
    \HStepCost{\valuepolicy{\policyopt}{\MDP}}{\policy}{\MDP}{H} - \HStepCost{\valuepolicy{\policyopt}{\MDP}}{\policy}{\MDPhat}{H} &= \HStepCost{\valuepolicy{\policyopt}{\MDP}}{\policy}{\MDP}{H} - \HStepCost{\valuepolicy{\policyopt}{\MDP}}{\policy}{\mathcal{M'}}{H} \\
    &+ \HStepCost{\valuepolicy{\policyopt}{\MDP}}{\policy}{\mathcal{M'}}{H} - \HStepCost{\valuepolicy{\policyopt}{\MDP}}{\policy}{\MDPhat}{H} \\
    \end{aligned}
\end{equation}
Let  $\Delta P(s_0 \ldots s_{H}) =  \dynfn \left(s_0 \ldots s_{H} \right) - \dynfnhat \left(s_0 \ldots s_{H} \right)$ represent the difference in distribution of states encountered by executing $\pi$ on $\MDP$ and $\MDPhat$ respectively starting from state $s_0$. 

Expanding the RHS of (\ref{eq:value_m_mhat}) 
\begin{equation}
     = \sum_{s_0, \ldots, s_{H}} \Delta P(s_0 \ldots s_{H}) \left(\sum_{i=0}^{H-1} \gamma^{i} c(s_i, a_i) + \gamma^{H} \valuepolicy{\policyopt}{\MDP}(s_{H}) \right) + \expect{\distTraj{\pi}{\MDPhat}}{\sum_{i=0}^{H-1} \gamma^{i} \left( \cost(s_i, a_i) - \costhat(s_i, a_i) \right) }
\end{equation}
Since the first state $s_1$ is the same
\begin{equation}
    \begin{aligned}
    &= \sum_{s_1, \ldots, s_H} \Delta P(s_1 \ldots s_H) \left(\sum_{i=1}^{H-1} \gamma^i c(s_i, a_i) + \gamma^{H} \valuepolicy{\policyopt}{\MDP}(s_H)  \right) 
    + \expect{\distTraj{\pi}{\MDPhat}}{\sum_{i=0}^{H-1} \gamma^i \left( \cost(s_i, a_i) - \costhat(s_i, a_i) \right) }\\
    &\leq \norm{\sum_{s_1, \ldots, s_{H}} \Delta P(s_1 \ldots s_{H}) \left(\sum_{i=1}^{H-1} \gamma^{i} c(s_i, a_i) + \gamma^{H} \valuepolicy{\policyopt}{\MDP}(s_{H})  \right)}{\infty} 
    + \norm{\expect{\distTraj{\pi}{\MDPhat}}{\sum_{i=0}^{H-1} \gamma^{i} \left( \cost(s_i, a_i) - \costhat(s_i, a_i) \right) }}{\infty}\\
    &\leq \norm{\sum_{s_1, \ldots, s_{H}} \Delta P(s_1 \ldots s_{H}) \left(\sum_{i=1}^{H-1} \gamma^{i} c(s_i, a_i) + \gamma^{H} \valuepolicy{\policyopt}{\MDP}(s_{H})  \right)}{\infty} 
    + \frac{1 - \gamma^{H}}{1 - \gamma}\alpha\\
    \end{aligned}
\end{equation}
where the first inequality is obtained by applying the triangle inequality and the second one is obtained by applying triangle inequality followed by the upper bound on the error in cost-function. 

% \mohak{Now, if we only consider the first term and note that $\sum_{s_2,\ldots,s_{H+1}} K\Delta P(s_2 \ldots s_{H+1}) = 0$ for any constant $K$. Thus, we have }
\begin{equation}
    \begin{aligned}
    &\leq \norm{\sum_{s_2, \ldots, s_{H+1}} \Delta P(s_2 \ldots s_{H+1})}{\infty}\sup (\sum_{i=1}^{H-1} \gamma^{i} c(s_i, a_i) + \gamma^{H} \valuepolicy{\policyopt}{\MDP}(s_{H}) - K) +  \frac{1 - \gamma^{H}}{1 - \gamma}\alpha
    \end{aligned}
\end{equation}
By choosing $K = \sum_{i=1}^{H-1} \gamma^{i}(c_{\max} + c_{\min})/2 + \gamma^{H} (V_{\max} + V_{\min})/2$ we can ensure that the term inside $\sup$ is upper-bounded by $\gamma(1 - \gamma^{H-1})/ (1 - \gamma)((c_{\max} - c_{\min})/2) + \gamma^{H} (V_{\max} - V_{\min})/2$

\begin{equation}
    \begin{aligned}
    \leq \gamma \left(\frac{1 - \gamma^{H-1}}{1 - \gamma}\right) \alpha H \left(\frac{c_{\max} - c_{\min}}{2}\right) + \gamma^{H}\alpha H \left(\frac{V_{\max} - V_{\min}}{2}\right) + \frac{1 - \gamma^{H}}{1 - \gamma}\alpha
    \end{aligned}
\end{equation}
\end{proof}
% \mohak{In the above lemma there should be factor of $1 - \gamma^{H-1}$ instead of $1 - \gamma^{H}$ with the $c_{\max} - c_{\min}$ term as the first cost terms are the same? In $\Delta P$ expression, should we cancel out $P(s_1)$ term?}
The above lemma builds on similar results in~\citep{kakade2003exploration, abbeel2005exploration, ross2012agnostic}.

We are now ready to prove our main theorem, i.e. the performance bound of an $H$-step greedy policy that uses an approximate model and approximate value function.
%MPC policy that uses an approximate model and approximate value function.

\textbf{ Proof of Theorem~\ref{thm:hstep_bound}}

% \begin{theorem}
% \label{thm:hstep_bound}
% Let $\hat{\pi}(s) = \argminprob{\pi \in \Pi}\; \expect{\distTraj{\pi}{\MDPhat}}{\sum_{i=0}^{H-1} \gamma^i c(s_i,a_i) + \gamma^H \hat{V}(s_H)}$, the performance loss of $\hat{\pi}(s)$ w.r.t optimal policy $\pi^*$ on $\mathcal{M}$ is bounded by 
%     % \begin{equation}
%     % \label{eq:hstepadp_loss:v_loss}
%     %     \norm{V^{\pi^*}_{\mathcal{M}}(s) - V^{\hat{\pi}}_{\mathcal{M}}(s)}{\infty} \leq 2\left(\frac{\alpha}{1 - \gamma} + \gammaH\alpha \frac{(1 - \gamma^{H-1})}{(1 - \gamma^{H})(1 - \gamma)}(r_{\max} - r_{\min}) + \frac{\gamma^H H\alpha}{1-\gamma^H}\left(V_{\max} - V_{\min} \right) + \frac{\gamma^H \epsilon}{1-\gamma^{H}} \right)
%     % \end{equation}
% \begin{equation}
%     \label{eq:hstepadp_loss:v_loss}
%     \begin{aligned}
%     &\norm{V^{\pi^*}_{\mathcal{M}}(s) - V^{\hat{\pi}}_{\mathcal{M}}(s)}{\infty} \\ 
%     & \leq 2 \left(\frac{\gamma (1 - \gamma^{H-1})}{(1 - \gamma^{H})(1 - \gamma)} \alpha H \left(\frac{c_{\max} - c_{\min}}{2}\right) + \frac{\gamma^{H}\alpha H}{1 - \gamma^{H}} \left(\frac{V_{\max} - V_{\min}}{2}\right) + \frac{\alpha}{1 - \gamma} + \frac{\gamma^{H}\epsilon}{1 - \gamma^{H}} \right)
%     \end{aligned}
% \end{equation}
% \end{theorem}

\begin{proof}
Since, $\hat{\pi}$ is the greedy policy when using $\MDPhat$ and $\valuehat$,
\begin{equation}
    \label{eq:hstepadp_loss:sim_comparison}
    \begin{aligned}
        \HStepCost{\valuehat}{\policyhat}{\hat{\mathcal{M}}}{H} &\leq \HStepCost{\valuehat}{\policyopt}{\MDPhat}{H} \\
        \HStepCost{\valuepolicy{\policyopt}{\mathcal{M}}}{\policyhat}{\hat{\mathcal{M}}}{H} &\leq \HStepCost{\valuepolicy{\policyopt}{\mathcal{M}}}{\policyopt}{\MDPhat}{H} + 2\gamma^{H}\epsilon~\text{(using $\norm{\valuehat - \valuepolicy{\policyopt}{\MDP}}{1} \leq \epsilon$)}  \\
    \end{aligned}
\end{equation}

Also, we have
\begin{equation}
    \begin{aligned}
    \HStepCost{\valuepolicy{\policyopt}{\MDP}}{\policyhat}{\MDP}{H} - \HStepCost{\valuepolicy{\policyopt}{\MDP}}{\policyopt}{\MDP}{H} &= \left(\HStepCost{\valuepolicy{\policyopt}{\MDP}}{\policyhat}{\MDP}{H} - \HStepCost{\valuepolicy{\policyopt}{\MDP}}{\policyhat}{\MDPhat}{H}\right) \\
    &- \left(\HStepCost{\valuepolicy{\policyopt}{\MDP}}{\policyopt}{\MDP}{H} - \HStepCost{\valuepolicy{\policyopt}{\MDP}}{\policyopt}{\MDPhat}{H}\right) \\
    &+ \left(\HStepCost{\valuepolicy{\policyopt}{\MDP}}{\policyhat}{\MDPhat}{H} - \HStepCost{\valuepolicy{\policyopt}{\MDP}}{\policyopt}{\MDPhat}{H}\right)
    \end{aligned}
\end{equation}
The first two terms can be bounded using Lemma ~\ref{lemma:simulation_lemma} and the third term using Eq.~\eqref{eq:hstepadp_loss:sim_comparison} to get

\begin{equation}
    \label{eq:hstepadp_loss:policy_comparison}
    \begin{aligned}
    &\HStepCost{\valuepolicy{\policyopt}{\MDP}}{\policyhat}{\MDP}{H} - \HStepCost{\valuepolicy{\policyopt}{\MDP}}{\policyopt}{\MDP}{H} \\
    &\leq 2 \left( \gamma \frac{1 - \gamma^{H-1}}{1 - \gamma} \alpha H \left(\frac{c_{\max} - c_{\min}}{2}\right) + \gamma^{H}\alpha H \left(\frac{V_{\max} - V_{\min}}{2}\right) + \frac{1 - \gamma^{H}}{1 - \gamma}\alpha + \gamma^{H}\epsilon \right)    
    \end{aligned}
\end{equation}
Now, let $s$ be the state with max loss $\valuepolicy{\policyhat}{\MDP}(s) - \valuepolicy{\policyopt}{\MDP}(s)$

\begin{equation}
    \begin{aligned}
    \valuepolicy{\policyhat}{\MDP}(s) - \valuepolicy{\policyopt}{\MDP}(s) &= \HStepCost{\valuepolicy{\policyhat}{\MDP}}{\policyhat}{\MDP}{H} - \HStepCost{\valuepolicy{\policyopt}{\MDP}}{\policyopt}{\MDP}{H} \\
    &= \left(\HStepCost{\valuepolicy{\policyhat}{\MDP}}{\policyhat}{\MDP}{H} - \HStepCost{\valuepolicy{\policyopt}{\MDP}}{\policyhat}{\MDP}{H} \right) + \left(  \HStepCost{\valuepolicy{\policyopt}{\MDP}}{\policyhat}{\MDP}{H} - \HStepCost{\valuepolicy{\policyopt}{\MDP}}{\policyopt}{\MDP}{H} \right) \\
    &= \gamma^{H}\left(\valuepolicy{\policyhat}{\MDP}(s_{H+1}) - \valuepolicy{\policyopt}{\MDP}(s_{H+1})\right) + \left(\HStepCost{\valuepolicy{\policyopt}{\MDP}}{\policyhat}{\MDP}{H} - \HStepCost{\valuepolicy{\policyopt}{\MDP}}{\policyopt}{\MDP}{H} \right) \\
    &\leq \gamma^{H}\left( \valuepolicy{\policyhat}{\MDP}(s) - \valuepolicy{\policyopt}{\MDP}(s) \right) \\
    &+ 2 \left(\frac{\gamma (1 - \gamma^{H-1})}{1 - \gamma} \alpha H \left(\frac{c_{\max} - c_{\min}}{2}\right) + \gamma^{H}\alpha H \left(\frac{V_{\max} - V_{\min}}{2}\right) + \frac{1 - \gamma^{H}}{1 - \gamma}\alpha + \gamma^{H}\epsilon \right) 
    \end{aligned}
\end{equation}
where last inequality comes from applying Eq.~\eqref{eq:hstepadp_loss:policy_comparison} and the fact that $s$ is the state with max loss. The final expression follows from simple algebraic manipulation.
\end{proof}

\subsection{Experiment Details}
\label{subsec:further_experiment_details}

\subsubsection{Task Details}

\textbf{\cartpole} 
\begin{itemize}
    \item Reward function: $x_{cart}^2 + \theta_{pole}^2 + 0.01v_{cart} + 0.01\omega_{pole} + 0.01a^2$
    \item Observation: $\left[x_{cart}, \theta_{pole},  v_{cart}, \omega_{pole}\right]$ (4 dim)
\end{itemize}

\textbf{\peginsertion}
We simulate sensor noise by placing a simulated position sensor at the target location in the MuJoCo physics engine that adds Gaussian noise with $\sigma=10$cm to the observed 3D position vector. \mppi uses a deterministic model that does not take sensor noise into account for planning. Every episode lasts for 75 steps with a timestep of 0.02 seconds between steps

\begin{itemize}
    \item Reward function: $-1.0* ||x_{ee} - x_{target}||_{1} - 5.0 * ||x_{ee} - x_{target||_{2}} + 5*\mathbbm{1}\left(||x_{ee} - x_{target}||_{2} < 0.06 \right)$
    \item Observation: $\left[q_{pos}, q_{vel}, x_{ee}, x_{target}, x_{ee} - x_{target}, ||x_{ee} - x_{target}||_{1}, ||x_{ee} - x_{target||_{2}} \right]$ (25 dim)
\end{itemize}

An episode is considered successful if the peg stays within the hole for atleast 5 steps.

\textbf{\handpen}
This environment was used without modification from the accompanying codebase for~\citet{Rajeswaran-RSS-18} and is available at \href{https://bit.ly/3f6MNBP}{https://bit.ly/3f6MNBP}

\begin{itemize}
    \item Reward function:  $- ||x_{obj} -x_{des} ||_2 + z_{obj}^{T}z_{des} + $ Bonus for proximity to desired pos + orien, $z_{obj}^{T}z_{des}$ represents dot product between object axis and target axis to measure orientation similarity. 
    \item Observation: $\left[q_{pos}, x_{obj}, v_{obj}, z_{obj}, z_{des}, x_{obj} -x_{des}, z_{obj} - z_{des}\right]$ (45 dim)
\end{itemize}
Every episode lasts 75 steps with a timestep of 0.01 seconds between steps. An episode is considered successful if the orientation of the pen stays within a specified range of desired orientation for atleast 20 steps. The orientation similarity is measured by the dot product between the pen's current longitudinal axis and desired with a threshold of 0.95.

\subsubsection{Learning Details}
% Here we present details of the learning setup and parameters used. All parameters were found using a coarse grid search and better settings might exist.\\
\textbf{Validation}: 
Validation is performed after every $N$ training episodes during training for $N_{\text{eval}}$ episodes using a fixed set of start states that the environment is reset to. We ensure that the same start states are sampled at every validation iteration by setting the seed value to a pre-defined validation seed, which is kept constant across different runs of the algorithm with different training seeds. This helps ensure consistency in evaluating different runs of the algorithm. For all our experiments we set $N=40$ and $N_{\text{eval}}=30$.

\textbf{\algName}:
For all tasks, we represent Q function using 2 layered fully-connected neural network with 100 units in each layer and ReLU activation. We use \textsc{Adam}~\citep{kingma2014adam} for optimization with a learning rate of 0.001 and discount factor $\gamma=0.99$. Further, the buffer size is $1500$ for \cartpole and $3000$ for the others, with batch size of $64$ for all. We smoothly decay $\lambda$ according to the following sublinear decay rate

\begin{equation}
\lambda_{t} = \frac{\lambda_0}{1 + \kappa \sqrt{t}}
\end{equation} 
where the decay rate $\kappa$ is calculate based on the desired final value of $\lambda$. For batch size we did a search from [16, 64] with a step size of 16 and buffer size was chosen from {1500, 3000, 5000}. While batch size was tuned for cartpole and then fixed for the remaining two environments, the buffer size was chosen independently for all three.

\textbf{Proximal Policy Optimization (PPO)}:
Both policy and value functions are represented by feed forward networks with 2 layers each with 64 and 128 units for policy and value respectively. All other parameters are left to the default values. The number of trajectories collected per iteration is modified to correspond with the same number of samples collected between validation iterations for \algName. We collect 40 trajectories per iteration. Asymptotic performance is reported as average of last 10 validation iterations after 500 training iters in \peginsertion and 2k in \handpen.

% \textbf{Soft Actor Critic (SAC)}:
% We used the open-source implementation of SAC available at \href{https://bit.ly/38HZ26B}{https://bit.ly/38HZ26B}. We employed automatic entropy tuning and set all other parameters to the default values from the Soft Actor-Critic paper~\citep{haarnoja2018soft}.

\begin{table}[t!]
% \label{tab:cartpole_mppi_params}
\caption{MPPI Parameters}
\parbox{0.33\linewidth}{
\caption*{\cartpole}
\begin{center}
\begin{tabular}{ll}
\multicolumn{1}{c}{\bf Parameter}  &\multicolumn{1}{c}{\bf Value}
\\ \hline \\
Horizon         &32 \\
Num particles   &60 \\
Covariance ($\Sigma$) & 0.45 \\
Temperature( $\beta$)  & 0.1 \\
Filter coeffs         & [1.0, 0.0, 0.0] \\
Step size  & 1.0 \\
$\gamma$ & 0.99
\end{tabular}
\end{center}
}
\parbox{0.33\linewidth}{
\caption*{\peginsertion}
% \label{tab:peg_mppi_params}
\begin{center}
\begin{tabular}{ll}
\multicolumn{1}{c}{\bf Parameter}  &\multicolumn{1}{c}{\bf Value}
\\ \hline \\
Horizon         &20 \\
Num particles   &100 \\
Covariance ($\Sigma$) & 0.25 \\
Temperature( $\beta$)  & 0.1 \\
Filter coeffs         & [0.25, 0.8, 0.0]\\
Step size  & 0.9\\
$\gamma$ & 0.99
\end{tabular}
\end{center}
}
\parbox{0.33\linewidth}{
\caption*{\handpen}
% \label{tab:pen_mppi_params}
\begin{center}
\begin{tabular}{ll}
\multicolumn{1}{c}{\bf Parameter}  &\multicolumn{1}{c}{\bf Value}
\\ \hline \\
Horizon         &32 \\
Num particles   &100 \\
Covariance ($\Sigma$) & 0.3 \\
Temperature( $\beta$)  & 0.15 \\
Filter coeffs         & [0.25, 0.8, 0.0] \\
Step size  & 1.0 \\
$\gamma$ & 0.99
\end{tabular}
\end{center}
}
\label{tab:mppi_params}
\end{table}

\textbf{\mppi parameters}
Table~\ref{tab:mppi_params} shows the \mppi parameters used for different experiments. In addition to the standard \mppi parameters, in certain cases we also use a step size parameter as introduced by~\citet{wagener2019online}. For \handpen and \peginsertion we also apply autoregressive filtering on the sampled \mppi trajectories to induce smoothness in the sampled actions, with tuned filter coefficients. This has been found to be useful in prior works~\citep{summers2020lyceum,lowrey2018plan} for getting \algName to work on high dimensional control tasks. The temperature, initial covariance and step size parameters for MPPI were tuned using a grid search with true dynamics. Temperature and initial covariance were set within the range of $\left[0.0, 1.0\right]$ and step size from $\left[0.5,1.0\right]$ with a discretization of 0.05. The number of particles were searched from $\left[40,120\right]$ with a step size of 10 and the horizon was chosen from 4 different values $\left[16, 20, 32, 64\right]$. The best performing parameters  then chosen based on average reward over 30 episodes with a fixed seed value to ensure reproducibility. The same parameters were then used in the case of biased dynamics and \algName, to clearly demonstrate that \algName can overcome sources of error in the base MPC implementation.
%~\ref{tab:cartpole_mppi_params},~\ref{tab:peg_mppi_params},~\ref{tab:pen_mppi_params}

\end{document}